\newcommand{\bigO}{\mathcal{O}}
\theoremstyle{plain}
\newtheorem{theorem}{Theorem}[section]
\theoremstyle{definition}
\newtheorem{assumption}[theorem]{Assumption}
\theoremstyle{remark}
\begin{document}

\twocolumn[
\icmltitle{Mean-Field Bayesian Optimisation}

\begin{icmlauthorlist}
\icmlauthor{Petar Steinberg\textsuperscript{*}}{ucl}
\icmlauthor{Juliusz Ziomek\textsuperscript{*}}{oxford}
\icmlauthor{Matej Jusup}{ethz}
\icmlauthor{Ilija Bogunovic}{ucl}
\end{icmlauthorlist}

\icmlaffiliation{ucl}{University College London}
\icmlaffiliation{oxford}{University of Oxford}
\icmlaffiliation{ethz}{ETH Zurich}

\icmlcorrespondingauthor{Juliusz Ziomek}{juliusz (dot) ziomek [at] univ (dot) ox [dot] ac (dot) uk}

\icmlkeywords{Machine Learning}

\vskip 0.3in
]

\printAffiliationsAndNotice{\icmlEqualContribution}

\begin{abstract}
We address the problem of optimising the average payoff for a large number of cooperating agents, where the payoff function is unknown and treated as a black box. While standard Bayesian Optimisation (BO) methods struggle with the scalability required for high-dimensional input spaces, we demonstrate how leveraging the mean-field assumption on the black-box function can transform BO into an efficient and scalable solution. Specifically, we introduce MF-GP-UCB, a novel efficient algorithm designed to optimise agent payoffs in this setting. Our theoretical analysis establishes a regret bound for MF-GP-UCB that is independent of the number of agents, contrasting sharply with the exponential dependence observed when naive BO methods are applied. We evaluate our algorithm on a diverse set of tasks, including real-world problems, such as optimising the location of public bikes for a bike-sharing programme, distributing taxi fleets, and selecting refuelling ports for maritime vessels. Empirical results demonstrate that MF-GP-UCB significantly outperforms existing benchmarks, offering substantial improvements in performance and scalability, constituting a promising solution for mean-field, black-box optimisation. The code is available at \url{https://github.com/petarsteinberg/MF-BO}. \looseness=-1

\end{abstract}

\section{Introduction}

\begin{figure}[t!]
    \centering
    \includegraphics[width=0.48\textwidth,trim={0 1.2cm 4.2cm 1cm},clip]{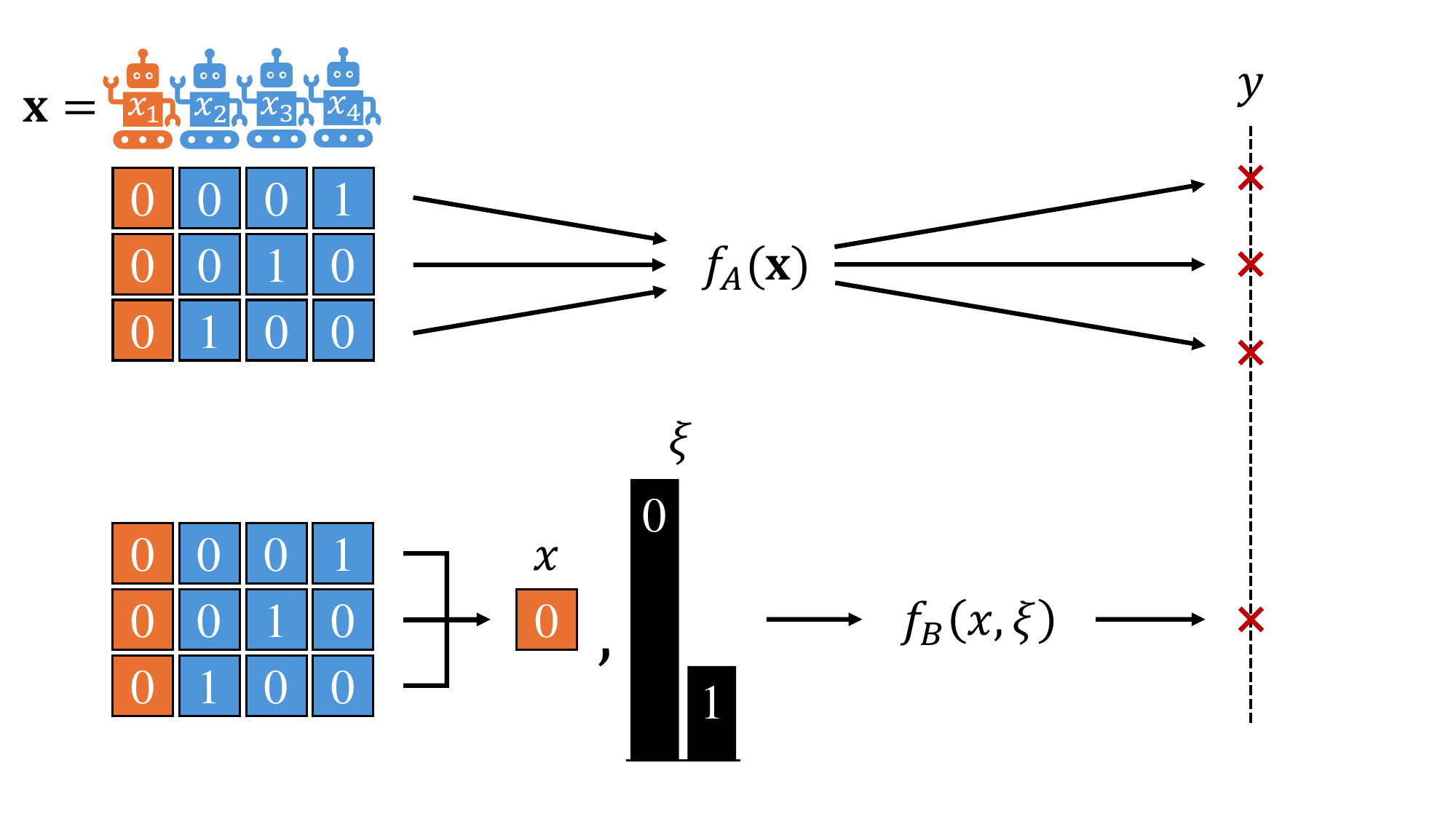}
    \vspace*{\dimexpr-\baselineskip\relax} 
    \caption{MF-GP-UCB utilises invariance under permutation of actions to achieve a regret bound independent of the number of agents. Each row on the left represents a set of actions of the four agents, with the action of the representative agent (RA) in orange and the others in blue. \textbf{Top:} For a fixed RA action $x_1$, the function $f_A$ maps distinct permutations of action vector $\mathbf{x}$ to different values. \textbf{Bottom:} The function $f_B$ utilises the mean-field assumption, which converts permutations of action vector $\mathbf{x}$ into an identical distribution $\xi$, to output a single value for the RA given the observed distribution.}
    \label{fig:mfbo_invariance}
\end{figure}

Bayesian optimisation (BO) is a powerful tool for optimising expensive black-box functions. However, its application to large-scale problems with high-dimensional inputs has been limited due to computational and statistical challenges.
On the other hand, multi-agent systems involving a large population of cooperative agents operated by a central controller have recently received a lot of attention within the control and reinforcement learning communities due to the surge of mean-field algorithms, which show promising signs of tackling scalability issues. Mean-field control (MFC) and mean-field multi-agent reinforcement learning (MF-MARL) were successfully applied in cooperative settings, including taxi repositioning \citep{jusup2023safe,wang2020joint} and ride-sharing order dispatching \citep{li2019efficient}, but we observe an even bigger surge in mean-field games that include ride-sourcing \citep{salhab2017dynamic,zhang2023ride}, traffic routing \citep{courcoubetis2023stationary}, power markets \citep{bichuch2024stackelberg}, and carbon emissions regulation \citep{dayanikli2024multi}.
An interesting stream of research also analyses theoretical properties of mean-field algorithms that include cooperative, competitive, discrete and continuous formulations \citep{hu2023graphon,lasry2007mean,chen2021pessimism,Bauerle2021MeanProcesses,carmona2019model,gast2012mean,Gu2019DynamicMFCs,Gu2021Mean-FieldAnalysis,Motte2019Mean-fieldControls,pasztor2021efficient}. Still, this promising line of work was done through the lens of the mean-field distribution over state or state-action space, while a stateless setting that assumes the mean-field distribution over actions is yet to be explored.
This research gap complements a need for addressing BO scalability challenges. Thus, we exploit the mean-field assumption to introduce significant algorithmic advancements in high-dimensional BO, yield associated theoretical insights, and show this synergy is a versatile tool for a wide range of practical problems. 
To our knowledge, we are the first to introduce an algorithm for the bandit setting and showcase its usefulness in applications such as bike-sharing, ride-sharing, and maritime refuelling. \looseness=-1

We consider a cooperative multi-agent system involving a large population of identical agents interacting within an unknown game characterized by bandit feedback.  The unknown game refers to a setting where each agent can query the environment by executing its action. In turn, the agent receives a reward and the empirical distribution of other agents' actions. Being identical, all agents follow a policy assigned by the global controller at each time step.
The objective is to design an efficient algorithm that maximises the average reward across the population. As a motivating example, one can consider vehicle repositioning, where agents (e.g., taxis or vessels) are assigned to resources (e.g., passengers or ports). Then, for example, each agent seeks to maximise utility (e.g., number of rides) or minimise costs (e.g., waiting time). A greedy resource acquisition can negatively impact overall performance by, for example, causing congestion if most taxis are dispatched to the neighbourhoods with the highest demand or if too many vessels decide to refuel in the port with the best infrastructure. Optimal system performance emerges from a balanced distribution of agents across available resources, effectively mitigating congestion. In \Cref{sec:experiments}, we analyse a range of problems relevant to practitioners where cooperation is necessary for increased utility. \looseness=-1

The introduced setting can also be viewed as high-dimensional BO with the goal of optimising the objective over a set of high-dimensional inputs. In general, the regret bound scales exponentially with the number of dimensions. To improve the regret bounds, many structural properties were imposed on the shape of the reward function (e.g., an additive decomposition as investigated by \citet{rolland2018high}, \citet{han2021high}, and \citet{ziomek2023random}). Various methods have also been introduced to achieve empirical speedups (e.g., projections into lower-dimensional space by \citet{wang2016bayesian} and \citet{nayebi2019framework}, or reducing the search space by defining a trust region from \citet{eriksson2019scalable}). 
The major advantage of introduced MFBO is its invariance to the order of observed actions as depicted in \Cref{fig:mfbo_invariance}, which consequently makes the regret bounds independent of the number of agents. 
In this work, we answer the following question: 

\vspace{0.2cm} 
\textit{Can we design an algorithm with a regret bound independent of the number of agents for a cooperative multi-agent setting with bandit style feedback?}

\begin{tcolorbox}[
colback=green!5!white,
		colframe=black,
		arc=4pt,
		boxsep=0.3pt,
	]%
	\textbf{Contribution 1:} We propose MF-GP-UCB -- the first algorithm for Bayesian optimisation that leverages \textit{the mean-field assumption}.
\end{tcolorbox}%
\begin{tcolorbox}[
colback=blue!5!white,
		colframe=black,
		arc=4pt,
		boxsep=0.3pt,
	]
	\textbf{Contribution 2:} We prove the regret bound for MF-GP-UCB and show it is \emph{independent of the number of agents}.
\end{tcolorbox}%
\begin{tcolorbox}[
colback=red!5!white,
		colframe=black,
		arc=4pt,
		boxsep=0.3pt,
	]
	\textbf{Contribution 3:} We empirically demonstrate the superiority of our algorithm on a number of synthetic and real-world problems.
\end{tcolorbox}%

\section{Related Work}
\paragraph{High-dimensional BO}
While our work focuses on the setting where a large population of agents cooperates to optimise an unknown black-box function, we can also think about this problem from the perspective of high-dimensional BO, where we wish to optimise a function over a set of high-dimensional inputs. Standard BO struggles in high-dimensional spaces and, in general, the regret bound of the standard GP-UCB algorithm scales exponentially with the number of dimensions \cite{srinivas2009gaussian, chowdhury2017kernelized}. A number of different algorithms have been presented to tackle this problem. Methods such as REMBO \cite{wang2016bayesian} or HeSBO \cite{nayebi2019framework} project the original space into a space of lower dimensionality and conduct the optimisation process there. Decomposition methods \cite{rolland2018high, han2021high, ziomek2023random} assume the function can be additively decomposed into subfunctions, operating on spaces of lower dimensionalities. SaaSBO \cite{eriksson2021high} removes the dimensions that do not seem to be impacting the function value too much, thus reducing the difficulty of the problem. The famous TuRBO \cite{eriksson2019scalable} defines a trust region to which the optimisation process is restricted, effectively reducing the volume of the search space. Within this work, we exploit certain invariant properties of mean-field functions to achieve higher sample efficiency. The recent work of \citet{brown2024sample} studied invariances in BO, in general, but without considering mean-field systems specifically. Our mean-field assumption could be considered a permutation-invariance from their work's point of view. However, applying their result directly to our case will result in a regret bound scaling as $\mathcal{O}(\beta_T\sqrt{T\log T^M} / M!)$, which is better than naive BO, but still exhibits dependence on $M$, especially for large $T$. In contrast, the bound of our algorithm is completely independent of $M$. 
\looseness=-1

\paragraph{Mean-Field Control and MARL}
Closest to our setting are mean-field control (MFC) and mean-field multi-agent reinforcement learning (MF-MARL) formulations. We refer to MFC as the setting with a large population of cooperative agents under a known environment, while we reserve MF-MARL for an unknown environment.
\citet{gast2012mean} are the first to analyse MFC as a Mean-Field Markov Decision Process (MF-MDP) and show that the optimal reward converges to the solution of a continuous Hamilton-Jacobi-Bellman equation under some conditions.
\citet{Motte2019Mean-fieldControls} introduce MF-MDP under mean-field interaction both on state and actions and show the existence of $\epsilon$-optimal policies. 
\citet{carmona2019model} define MFC as an MF-MDP  over a limiting distribution of continuous agents' states and show an optimal policy exists. They further introduce a discretisation strategy for MFQ-learning.  
\citet{Bauerle2021MeanProcesses} formulate MFC as an MF-MDP where reward and transition functions depend on the empirical measure of the agents' states and show the existence of $\epsilon$-optimal policy under some conditions. 
\citet{chen2021pessimism} introduce and analyse a sub-optimality gap of SAFARI--an offline MF-MARL algorithm--for settings where the interaction with the environment during training can be prohibitive or even unethical (e.g., social welfare or other societal systems).
\citet{Gu2019DynamicMFCs,Gu2021Mean-FieldAnalysis} first set dynamic programming principles for MFC and then show that model-free kernel-based Q-learning has a linear convergence rate for MFC. They further show that the MFC approximation of cooperative MARL has an approximation error $\bigO(\nicefrac{1}{\sqrt{N}})$ with the number of agents $N$. \citet{hu2023graphon} show that a graphon MFC also achieves the discussed approximation error. 
\citet{pasztor2021efficient} introduce M\textsuperscript{3}-UCRL--an algorithm with a sublinear cumulative regret--for MF-MARL where the goal is to simultaneously optimise for the rewards and learn the dynamics from the online experience.
\citet{jusup2023safe} build on top of \citet{pasztor2021efficient} and introduce Safe-M\textsuperscript{3}-UCRL, an online algorithm for constrained MF-MARL that learns underlying dynamics while satisfying constraints throughout the execution. \looseness=-1

\section{Problem Statement} \label{sec:problem_statement}

We consider a problem setting, where $M$ agents cooperate to maximise a payoff of an unknown game. At each timestep $t$, each agent is assigned a context $\bm{c}_t \in C \subset \mathbb{R}^{d_c}$ with probability defined by some measure $p(\bm{c})$ and  they can choose a discrete action  $\bm{x}_t \in A \subset \mathbb{R}^{d_a}$, such that $|A| << M$. This context $\bm{c}$ can, for example, describe the agent's type or its objective at the current timestep $t$. For each $\bm{c}_t \in C$, we denote by $\xi_t(\cdot|\bm{c}) \in \Delta_A$ the distribution of selected actions for players with context $\bm{c}$ at time $t$ and we write $\xi_t = (\xi_t(\cdot|\bm{c}))_{\bm{c} \in C}$ to mean the collection of such distributions for each context value $\bm{c}$. As such, $\xi_t \in \Delta^C_A$, where $\Delta_A^C = \times_{\bm{c} \in C} \Delta_A$ denotes a set of joint distributions over actions for players with each context value. We assume that the payoff of the game is defined by some unknown function $f: A \times C \times  \Delta_A^C \to \mathbb{R}$. The problem setting of an unknown game with bandit-style feedback has been previously proposed by \citet{sessa2019no}. In their approach, the goal of the $i$-th agent is to maximise its unknown payoff function $ f^i(\bm{x}^i_t, \bm{x}^{-i}_t)$.
In this work, we assume a mean-field control setting (i.e., agents are identical and cooperate) and that each agent is given some context $\bm{c} \in C$, describing its type in a given round.  This means that $f^i(\bm{x}^i_t, \bm{x}^{-i}_t) \approx f(\bm{x}^i_t, \bm{c}_t^i, \xi_t)$ for each $i=1,\dots, M$. As we assume a mean-field setting, all agents are identical and have to follow the same procedure for selecting their actions.  After each round, the representative agent observes its own, noisy payoff $y_t = f(\bm{x}, \bm{c}, \xi_t) + \epsilon_t$, corrupted by some Gaussian noise $\epsilon_t \sim \mathcal{N}(0, R^2)$ with known variance $R^2$. We then wish to maximise the average population payoff $\mathbb{E}_{(\bm{x}, \bm{c}) \sim \xi_t(\bm{x}|\bm{c})p(\bm{c})}[f(\bm{x}, \bm{c}, \xi_t)]$. As such, the regret of the population is defined as:
\begin{align*}
     R_T =&  \sum_{t=1}^T\mathbb{E}_{(\bm{x}, \bm{c}) \sim \xi^\star(\bm{x}|\bm{c})p(\bm{c})}[f(\bm{x}, \bm{c}, \xi^*)] \\
     & - \sum_{t=1}^T\mathbb{E}_{(\bm{x}, \bm{c}) \sim \xi_t(\bm{x}|\bm{c})p(\bm{c})}[f(\bm{x}, \bm{c}, \xi_t)],
\end{align*}
where 
$
    \xi^* = \arg\max_{\xi \in \Delta_A^C} \mathbb{E}_{(\bm{x}, \bm{c}) \sim \xi(\bm{x}|\bm{c})p(\bm{c})}[f(\bm{x}, \bm{c}, \xi)]
$ is the optimal agents' action distribution. 
In other words, the regret is the difference between the average population payoff while using the optimal distribution $\xi^\star$ and the payoff when using the selected distribution $\xi_t$.  

\begin{table}[t!]
    \centering
    \caption{Comparison of regret bounds obtained with standard GP-UCB and our proposed algorithm, MF-GP-UCB, when utilising the squared exponential kernel in the proposed problem setting. By $\iota$ we denote $\log T$.} \label{tab:regret_comparison}
    \vspace*{\dimexpr\baselineskip\relax} 
    \begin{tabular}{c|c}
        Algorithm & Regret Bound \\
        \midrule
        GP-UCB  & $\mathcal{O}(\beta_T \sqrt{T \iota^{Md_A + d_C}})$\\ [0.1cm]
        Add-GP-UCB  & $\mathcal{O}(\beta_T \sqrt{M T \iota^{d_A + d_C}})$\\ [0.1cm]
        MF-GP-UCB & $\mathcal{O}(\beta_T \sqrt{T \iota^{d_A  + d_C + |A||C|}})$ \\ [0.1cm]
        \makecell{MF-GP-UCB \\ (Additive)} & $\mathcal{O}\left(\beta_T \sqrt{T \left(|C|\iota^{|A|} + \iota^{d_C} + \iota^{d_A }\right)}\right)$
    \end{tabular}
\end{table}

\textbf{Assumptions on the black-box function} As mentioned before, we assume that the black-box function obeys the mean-field assumption, which is that the payoff of agent $i$ at step $t$ depends only on its own action $\bm{x}_t^i$, its context $\bm{c}_t^i$ and the distribution of other agents' actions $\xi_t$. Formally, we assume that the blackbox function $f$ is a sample from a Gaussian Process prior \cite{rasmussen2003gaussian} $\mathcal{GP}(0, k(\cdot, \cdot))$ equipped with a  kernel function $k((\bm{x}, \bm{c}, \xi),(\bm{x}^\prime, \bm{c}^\prime, \xi^\prime))$, which we assume is known. This kernel operates on multiple arguments, one being a collection of probability distribution for each context. The distribution of actions for each context can be thought of as a vector in $\mathbb{R}^{|A|}$ (where $A$ is the set of possible actions).
We make a standard \cite{srinivas2009gaussian} assumption regarding the kernel smoothness with respect to this continuous input:

\begin{assumption} \label{as:lkernel}
    The kernel $k(\cdot, \cdot)$ satisfies the following condition on the derivatives of a sample path $f \sim \mathcal{GP}(0, k)$. There exist constants $a, b > 0$, such that:
\[
P \left(
\max_{\bm{x} \in A} \max_{\bm{c} \in C}\sup_{\xi \in \Delta^C_A} \left\lvert \frac{\partial f}{\partial \xi(\bm{x}^\prime|\bm{c}^\prime)} \right\rvert  > L
\right)
\leq a \exp \left( - \frac{L^2}{b} \right),
\]
for each $\bm{x}^\prime \in A$ and $\bm{c}^\prime \in C$.
\end{assumption}

\section{Proposed Algorithm}
Analogously to standard BO, we would first like to fit a Gaussian Process model (GP) to the observations collected so far $\mathcal{D}_{t-1}$. For brevity, let us denote the triple $(\bm{x}, \bm{c}, \xi)$ as $\bm{z}$. Given a kernel function $k(\bm{z}, \bm{z}^\prime)$ and the data obtained so far $\mathcal{D}_{t-1} = \{(\bm{z}_\tau, y_\tau)\}_{\tau=1}^{t-1}$, we can utilise standard GP update equations \cite{rasmussen2003gaussian} to obtain mean $\mu_{t-1}(\bm{z})$ and variance $\sigma^2_{t-1}(\bm{z})$ functions, as given below:
\begin{align*}
    \mu_{t-1}(\bm{z}) &= \bm{k}^T_{t-1} (\bm{K}_{t-1} + R^{-1}\mathbf{I})^{-1} \bm{y}\\
    \sigma^2_{t-1}(\bm{z}) &= k(\bm{z}, \bm{z}) - \bm{k}^T_{t-1} (\bm{K}_{t-1} + R^{-1}\mathbf{I})^{-1}\bm{k}_{t-1},
\end{align*}
where $\bm{y} \in \mathbb{R}^{t-1}$ with elements $(\bm{y})_j = y_j$ and $\bm{K}_{t-1} \in \mathbb{R}^{(t-1)\times(t-1)}$ with elements $(\bm{K}_{t-1})_{j,m} = k(z_j, z_m)$ and $\bm{k}_{t-1} \in \mathbb{R}^{t-1}$ with elements $(\bm{k}_{t-1})_j = k(\bm{z}, z_j)$. Similar to standard BO, we would like to rely on some acquisition function to help us select the distribution of actions $\xi$ to try next. We propose to use the following criterion, which we coin Mean-Field-Upper-Confidence-Bound (MF-UCB):
\begin{align*}
    \alpha_t(\xi|\mathcal{D}_{t-1}) = \mathbb{E}_{(\bm{x}, \bm{c}) \sim \xi(\bm{x}|\bm{c})p(\bm{c})}[\mu_{t-1}(\bm{z}) +\beta_t \sigma_{t-1}(\bm{z})]].
\end{align*}
 In \Cref{alg:ucb_mf_bo_centralised}, we present MF-GP-UCB, an algorithm that employs this acquisition function. At each timestep $t$, the algorithm fits a GP model to the data gathered so far, and then in line 5, it selects the context-dependent distribution of actions by optimising the MF-UCB criterion. Note that this criterion requires us to optimise over a distribution with respect to which the expectation is computed. In practice, this could be easily done with a reparametrisation trick \cite{kingma2015variational}. We now proceed to derive the bound on the representative agent's regret when using the MF-GP-UCB algorithm.

 \begin{theorem} \label{thm:ucb_mf_bo_bound_centralised}
    Let Assumption \ref{as:lkernel} hold, and run \Cref{alg:ucb_mf_bo_centralised} for $T$ rounds and set  $\beta_t = 2\log(|A||C||\Xi_t|t^2 / \sqrt{2\pi})$, where:
    \begin{align*}
        |\Xi_t| = \left( b|A||C|t^2(\log(a|A||C|) + \sqrt{\pi} / 2) \right)^{|A||C|}
    \end{align*}
 then we have that:
    \begin{equation*}
       \mathbb{E}[R_T] \le \mathcal{O}(\beta_T \sqrt{T \gamma_T} + \mathcal{B}),
    \end{equation*}
    where $\mathcal{B} = \sqrt{2|A|^3|C|}\min\{(b(\log(a|A||C|) + \sqrt{\pi} / 2))^{-1},|A||C|\}$ and $\gamma_T$ is the maximum information gain of the kernel.
\end{theorem}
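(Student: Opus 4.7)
The plan is to lift the standard GP-UCB analysis of \citet{srinivas2009gaussian} to the mean-field setting. The key technical hurdle is that $(\bm{x},\bm{c}) \in A\times C$ are discrete but the decision variable $\xi \in \Delta_A^C$ is continuous, so confidence intervals cannot be union-bounded directly over all admissible $\xi$. I would resolve this with a time-dependent discretisation: at round $t$, grid each of the $|A||C|$ coordinates of $\Delta_A^C$ with spacing $1/\tau_t$, where $\tau_t := b|A||C|t^2(\log(a|A||C|)+\sqrt{\pi}/2)$, yielding $\Xi_t$ of size $|\Xi_t|=\tau_t^{|A||C|}$ as stated.

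First I would set up the two high-probability events. Since $f \sim \mathcal{GP}(0,k)$, a Gaussian tail bound followed by a union bound over $(\bm{x},\bm{c},\xi) \in A\times C\times \Xi_t$ and over $t \le T$ shows that the choice $\beta_t = 2\log(|A||C||\Xi_t|t^2/\sqrt{2\pi})$ is exactly what makes
\[
|f(\bm{x},\bm{c},\xi) - \mu_{t-1}(\bm{x},\bm{c},\xi)| \le \beta_t^{1/2}\sigma_{t-1}(\bm{x},\bm{c},\xi)
\]
hold uniformly on the discretisation, with failure probability summable in $t$. Independently, \Cref{as:lkernel} plus a union bound over the $|A||C|$ partial-derivative directions yields a constant $L$ of order $\sqrt{b(\log(a|A||C|)+\sqrt{\pi}/2)}$ such that $|\partial f/\partial \xi(\bm{x}'|\bm{c}')| \le L$ whp for every $(\bm{x}',\bm{c}')$. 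Linearising along each coordinate then gives $|f(\bm{x},\bm{c},\xi) - f(\bm{x},\bm{c},[\xi]_t)| \le L|A||C|/\tau_t = \mathcal{O}(1/t^2)$ for any $\xi$ and its grid nearest-neighbour $[\xi]_t$.

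Next I would bound the instantaneous regret $r_t = g(\xi^*) - g(\xi_t)$, with $g(\xi) := \mathbb{E}_{(\bm{x},\bm{c})\sim \xi(\bm{x}|\bm{c})p(\bm{c})}[f(\bm{x},\bm{c},\xi)]$, in three moves. (i) By the $f$-Lipschitz bound plus a change-of-measure estimate between the laws $\xi^*(\cdot|\bm{c})p(\bm{c})$ and $[\xi^*]_t(\cdot|\bm{c})p(\bm{c})$, one obtains $g(\xi^*) \le \mathbb{E}_{[\xi^*]_t p}[f(\bm{x},\bm{c},[\xi^*]_t)] + \mathcal{O}(1/t^2)$. (ii) Applying the pointwise UCB at $[\xi^*]_t \in \Xi_t$ and the algorithm's greedy choice, the right-hand side is at most $\alpha_t([\xi^*]_t) \le \alpha_t(\xi_t)$. (iii) A mirrored argument on $\xi_t$ using the lower confidence bound gives $\alpha_t(\xi_t) \le g(\xi_t) + 2\beta_t^{1/2}\mathbb{E}_{\xi_t p}[\sigma_{t-1}(\bm{x},\bm{c},\xi_t)] + \mathcal{O}(1/t^2)$. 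Chaining produces $r_t \le 2\beta_t^{1/2}\mathbb{E}_{\xi_t p}[\sigma_{t-1}(\bm{x},\bm{c},\xi_t)] + \mathcal{O}(1/t^2)$. Summing over $t$, the $\sum_t 1/t^2$ discretisation errors collapse into the constant $\mathcal{B}$. For the main term, Cauchy--Schwarz yields $\sum_t \mathbb{E}_{\xi_t p}[\sigma_{t-1}] \le \sqrt{T\sum_t \mathbb{E}_{\xi_t p}[\sigma_{t-1}^2]}$; since the actual query $(\bm{x}_t,\bm{c}_t)$ is drawn from $\xi_t(\cdot|\bm{c})p(\bm{c})$, taking expectations over these draws and invoking the standard information-gain inequality $\sum_t \sigma_{t-1}^2(\bm{z}_t) \le 2\gamma_T/\log(1+R^{-2})$ supplies the $\mathcal{O}(\beta_T\sqrt{T\gamma_T})$ term.

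The main obstacle lies in step (i)'s change-of-measure estimate: $|\mathbb{E}_{\xi^*}\phi - \mathbb{E}_{[\xi^*]_t}\phi|$ for $\phi \in \{f,\mu+\beta^{1/2}\sigma\}$ is not immediately controlled by the Lipschitz constant because $\phi$ is not a priori pointwise bounded. I would circumvent this by exploiting $\sum_\bm{x}(\xi^*(\bm{x}|\bm{c})-[\xi^*]_t(\bm{x}|\bm{c}))=0$ to rewrite the change of measure as a signed combination of $\phi$-\emph{differences} (relative to an arbitrary reference action) rather than $\phi$-values, and then controlling these either via the same Lipschitz-$L$ machinery or via the trivial $2\max_\bm{x}|\phi|$ bound when the former is loose -- this is precisely where the $\min\{(b(\log(a|A||C|)+\sqrt{\pi}/2))^{-1},|A||C|\}$ factor in $\mathcal{B}$ enters. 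The $\sqrt{|A|^3|C|}$ prefactor then arises from carefully cumulating the $|A|,|C|$ book-keeping across both Lipschitz corrections and the change-of-measure term in step (i) (and its mirror in step (iii)).
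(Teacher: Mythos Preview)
Your route differs substantively from the paper's. You set up Srinivas-style high-probability confidence intervals on the grid $\Xi_t$ together with a high-probability Lipschitz constant; the paper instead works entirely in expectation over the GP prior, in the style of \citet{takeno2023randomized}. Its decomposition is $\mathbb{E}[R_T]\le R_T^A+R_T^C+R_T^D$: the term $R_T^D=\sum_t\mathbb{E}[\alpha_t(\xi_t)-g(\xi_t)]$ collapses \emph{exactly} to $\sum_t\mathbb{E}[\beta_t\sigma_{t-1}(\bm{z}_t)]$ via the identity $\mathbb{E}[f\mid\mathcal{D}_{t-1}]=\mu_{t-1}$; the term $R_T^C=\sum_t\mathbb{E}[g([\xi^\star]_t)-\alpha_t([\xi^\star]_t)]$ is bounded by summing the Gaussian positive-part formula $\mathbb{E}[(f-U_t)^+\mid\mathcal{D}_{t-1}]\le \tfrac{\sigma_{t-1}}{\sqrt{2\pi}}e^{-\beta_t/2}$ over all of $A\times C\times\Xi_t$ (this is the origin of the stated $\beta_t$, and note the algorithm uses $\beta_t$ as the UCB multiplier, not $\beta_t^{1/2}$); and the change-of-measure piece of $R_T^A$ is controlled by $\mathbb{E}_f[\max_{\xi\in\Xi_t}f(\bm{x},\bm{c},\xi)]\le\sqrt{2\log|\Xi_t|}$ via the sub-Gaussian-maxima lemma, together with $\mathbb{E}[L_{\max}]$ for the argument-shift piece. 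No high-probability events appear anywhere.

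There is a genuine gap in your final step. You reduce to $\sum_t\mathbb{E}_{\xi_t p}[\sigma_{t-1}^2(\bm{x},\bm{c},\xi_t)]$ and invoke the information-gain inequality, but that inequality bounds $\sum_t\sigma_{t-1}^2(\bm{z}_t)$ at the \emph{realised} queries $\bm{z}_t=(\bm{x}_t,\bm{c}_t,\xi_t)$, not its conditional expectation given $\mathcal{D}_{t-1}$. The two coincide only after a further outer expectation (tower), so at that point you are forced into an expected-regret argument anyway---and then you must also control $\mathbb{E}[R_T\mathbb{1}_{\mathcal{E}^c}]$ on the failure of your confidence/Lipschitz event, which is not free because a GP sample $f$ is unbounded. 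The paper sidesteps both issues by never conditioning on an event. Your handling of the change-of-measure obstacle is also vaguer than the paper's: you propose the fallback $2\max_{\bm{x}}|\phi|$ without saying how to bound it on a high-probability event; the paper instead bounds $\mathbb{E}_f[\max_{\xi\in\Xi_t} f]$ directly on the discretisation, and tracking the resulting $|A|,|C|,\tau_t$ factors is exactly what produces the $\sqrt{2|A|^3|C|}$ prefactor and the $\min\{\cdot,\cdot\}$ in $\mathcal{B}$.
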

\begin{proof} 

We provide a sketch here and defer the full proof to
Appendix \ref{ap:ucb_mf_bo_bound_centralised_proof}.    
Our setting poses an additional layer of difficulty as the actions $\bm{x}_t$ of the representative agents are not deterministic, even conditioned on all previous data. We thus follow the general proof idea of \citet{takeno2023randomized}, but with important modifications. First of all, for the purpose of analysis, we introduce a discretisation of the space of possible distributions $\Xi_t$ that becomes finer with each iteration $t$. Let us define $g(\xi) = \mathbb{E}_{\bm{x},\bm{c} \sim \xi(\bm{x}|\bm{c})p(\bm{c})}[f(\bm{x}, \bm{c}, \xi)]$. We can then decompose regret as:
\begin{align*}
    \mathbb{E}[r_t] &= \mathbb{E}[g(\xi^\star) - g([\xi^\star]_t)] + \mathbb{E}[g([\xi^\star]_t)  -
    g(\xi_t)].
\end{align*}
Bounding the term $g([\xi^\star]_t)  - g(\xi_t)$ is relatively easy, as by carefully manipulating the expectations, we can show that $\alpha_t(\xi|\mathcal{D}_{t-1})$ is a high probability upper bound on  $\mathbb{E}[g(\xi_t)]$ and the rest follows the standard expected regret BO proof on finite domain $\Xi_t \times A \times C$.

The main difficulty lies in bounding $\mathbb{E}[g(\xi^\star) - g([\xi^\star]_t)]$. This difference can be decomposed as:
\begin{align*}
    g(\xi^\star) &- g([\xi^\star]_t) =\\& \mathbb{E}_{\bm{x},\bm{c} \sim \xi^\star(\bm{x}|\bm{c})p(\bm{c})}[f(\bm{x}, \bm{c}, \xi^\star) - f(\bm{x}, \bm{c}, [\xi^\star]_t)] \\ +\,& \mathbb{E}_{\bm{c} \sim p(\bm{c})}\left[\sum_{\bm{x} \in A}f(\bm{x}, \bm{c}, [\xi^\star]_t)([\xi^\star]_t(\bm{x}|\bm{c}) - \xi^\star(\bm{x}|\bm{c})  )\right].
\end{align*}
In other words, it is equal to the difference in function $f(\cdot)$ evaluated for $\xi^\star$ and $[\xi^\star]_t$ for a fixed distribution of $\bm{x}$, plus the difference in those sampling distributions. Bounding the first term can be achieved thanks to \Cref{as:lkernel}, ensuring function continuity. Bounding the second term can be achieved by bounding the expected maximum of the Gaussian Process and thanks to the fact that the discretisation is getting finer and thus $[\xi^\star]_t(\bm{x}|\bm{c}) - \xi^\star(\bm{x}|\bm{c})$ smaller.
\end{proof}

\begin{algorithm}[t!]
   \caption{MF-GP-UCB}
   \label{alg:ucb_mf_bo_centralised}
\begin{algorithmic}[1]
   \STATE {\bfseries Input:} action space $A$, evaluation budget $T$, exploration bonuses $\{\beta_t\}_{t=1}^T$, kernel function $k(\bm{x}, \bm{c}, \xi)$
   \STATE Initialize $\mathcal{D}_0 = \emptyset$.
   \FOR{$t=1$ {\bfseries to} $T$}
   \STATE Fit GP to $\mathcal{D}_{t-1}$, obtaining $\mu_{t-1}$, $\sigma_{t-1}$
   \STATE Solve \begin{equation*}
        \xi_t = \arg \max_{\xi \in \Delta^C_A}\alpha_t(\xi|\mathcal{D}_{t-1})  
    \end{equation*} \label{alg_line:acq_opt}
    \STATE  The population selects their actions according to $\xi_t$
    
   \STATE Representative agent observes the function value $y_t= f(\bm{x}_t, \bm{c}_t, \xi_t) + \epsilon_t$ 
   \STATE Data-buffer is updated $\mathcal{D}_{t} = \{(\bm{x}_t, \xi_t, \bm{c}_t, y_t)\} \cup \mathcal{D}_{t-1}$ 
   \ENDFOR
\end{algorithmic}
\end{algorithm}

We note that the derived result depends heavily on the maximum information gain (MIG) property of the used kernel function $k(\cdot, \cdot)$. If we were to utilise the RBF kernel defined below: \looseness=-1
\begin{equation*}
    k_\text{RBF}(\bm{z}, \bm{z}^\prime) = \exp\left( - \frac{\lVert \bm{z} - \bm{z}^\prime \rVert_2}{2 l} \right), 
\end{equation*}
for $\bm{z}, \bm{z}^\prime \in \mathbb{R}^d$ and some lengthscale $l > 0$, the standard result \citet{srinivas2009gaussian} establishes that $\gamma_T \le \mathcal{O}((\log T )^d)$. Due to the fact that our kernel operates on the distribution of actions rather than the actions themselves, the MIG of the kernel will not depend on the number of agents. If we represent the distribution for all contexts as a vector in $|A||C|$-dimensional space, $\bm{x} \in \mathbb{R}^{d_A}$ and $\bm{c} \in \mathbb{R}^{d_C}$, then the joint input $\bm{z} \in \mathbb{R}^{d_A + d_C + |A||C|}$. Running MF-GP-UCB utilising a joint RBF kernel over all inputs $\bm{z}$ results in the regret bound of $\mathcal{O}(\beta_T \sqrt{T \iota^{d_A  + d_C + |A||C|}})$, where $\iota = \log T$. Note that a naive application of standard BO to the input space of actions of $M$ agents and the context would result in a bound scaling exponentially with $Md_A + d_C$ instead. As such, our algorithm provides an improvement as long as $d_A  + |A||C| \le Md_A$ and provides significant improvement when $M$ is extreme compared to other quantities. We summarise these results in \Cref{tab:regret_comparison}, where we also include a popular variant of GP-UCB called Add-GP-UCB \citep{kandasamy2015high, rolland2018high} that employs a kernel that is a sum of subkernels operating on smaller dimensions. For such a kernel the MIG was shown \citep{rolland2018high} to grow as $\mathcal{O}(\sum_{i=1}^n(\log T)^{d_i})$, assuming the sum is over $n$ subkernels and $i$-th subkernel operates on $d_i$ dimensions. Note that even though the application of an additive kernel reduces the order dependence on $M$ from exponential to linear (as then $n=M$), it is still worse than the bound of MF-GP-UCB, which is unaffected by $M$. In MF-GP-UCB we can also employ additive kernels for an additional reduction in the MIG. To achieve greater sample efficiency, within our experiments, we utilised such an additive kernel:
\begin{equation} \label{eq:used_kernel}
    k(\bm{z},\bm{z}')=k_\text{RBF}(\bm{x}, \bm{x}')+k_\text{RBF}(\bm{c}, \bm{c}')+\sum_{\bm{c}\in C}k_\text{RBF}(\xi_c, \xi_c'),
\end{equation}
and we also included its regret bound in the aforementioned table for comparison under the name MF-GP-UCB (Additive). We now proceed to present the empirical performance of MF-GP-UCB, equipped wih such a kernel function.

\section{Experiments} \label{sec:experiments}

We divide our experiments into two categories: i) synthetic toy problems and ii) real-world problems, and we are interested in evaluating algorithm sample efficiency and solution quality.
For benchmark comparisons, we use TuRBO \cite{eriksson2019scalable} to represent the state-of-art algorithm for high-dimensional BO, alongside combinatorial optimisers such as Simulated Annealing (SA) and Genetic Algorithm (GA). We also include the random algorithm as a baseline. We utilise three publicly available datasets to develop custom MFC-inspired black-box functions that resemble real-world problems. Note that the number of agents $M$ is synonymous and used interchangeably with the dimensionality of a problem. Similarly, the term reward/payoff is often used to refer to the black-box function value. Running the experiments required approximately 100 days of CPU time on a cluster with varying CPU models, 93\% of which was used by TuRBO and about 2\% by each of the remaining algorithms (excluding random search). In \Cref{tab:runtimes}, we report a more detailed breakdown of the runtime (in minutes) for each experiment and algorithm.

\begingroup
\renewcommand{\arraystretch}{2}
\begin{table}[t!]
\caption{Runtime mean and standard deviation over 10 runs, reported in minutes for each experiment and algorithm. Random search is omitted as it has a negligible runtime.} \label{tab:runtimes}
\vspace*{\dimexpr\baselineskip\relax} 
\centering
\scriptsize
\begin{tabular}{l|c|c|c|c|}
\cline{2-5}
                                & \makecell{Genetic\\Algorithm} & \makecell{Simulated\\Annealing} & TuRBO & \makecell{MF-GP-\\UCB} \\ \hline
\multicolumn{1}{|l|}{Swarm}     &         $0.09\pm0.01$          &       $0.12\pm0.01$              &   $9.3\pm0.2$    &      $14\pm1$     \\ \hline
\multicolumn{1}{|l|}{Arena}     &     $0.11\pm0.01$           &        $0.14\pm0.01$             &   $10\pm1$    &    $22\pm2$       \\ \hline
\multicolumn{1}{|l|}{LouVelo}   &       $0.64\pm0.25$            &        $0.82\pm0.32$             &    $50\pm13$   &     $14\pm2$      \\ \hline
\multicolumn{1}{|l|}{Maritime}  &      $7.2\pm2.6$             &        $9.3\pm3.4$             &    $475\pm113$   &     $54\pm3$      \\ \hline
\multicolumn{1}{|l|}{NYC (a)} &       $88\pm13$            &         $108\pm16$            &    $4105\pm477$   &    $36\pm2$       \\ \hline
\multicolumn{1}{|l|}{NYC (b)} &        $95\pm4$           &        $118\pm4$             &   $4557\pm470$    &     $62\pm3$      \\ \hline
\multicolumn{1}{|l|}{NYC (c)} &          $89\pm15$         &        $113\pm19$             &   $4206\pm596$    &     $131\pm7$      \\ \hline
\end{tabular}
\vspace{-0.5cm}
\end{table}
\endgroup

\begin{figure*}[t!]
    \centering
    \subfigure[\textbf{Swarm motion:} $M=50$, $|A|=30$, $|C|=1$.]{%
        \includegraphics[width=0.24\textwidth, trim={0 0 1.7cm 2.5cm}, clip]{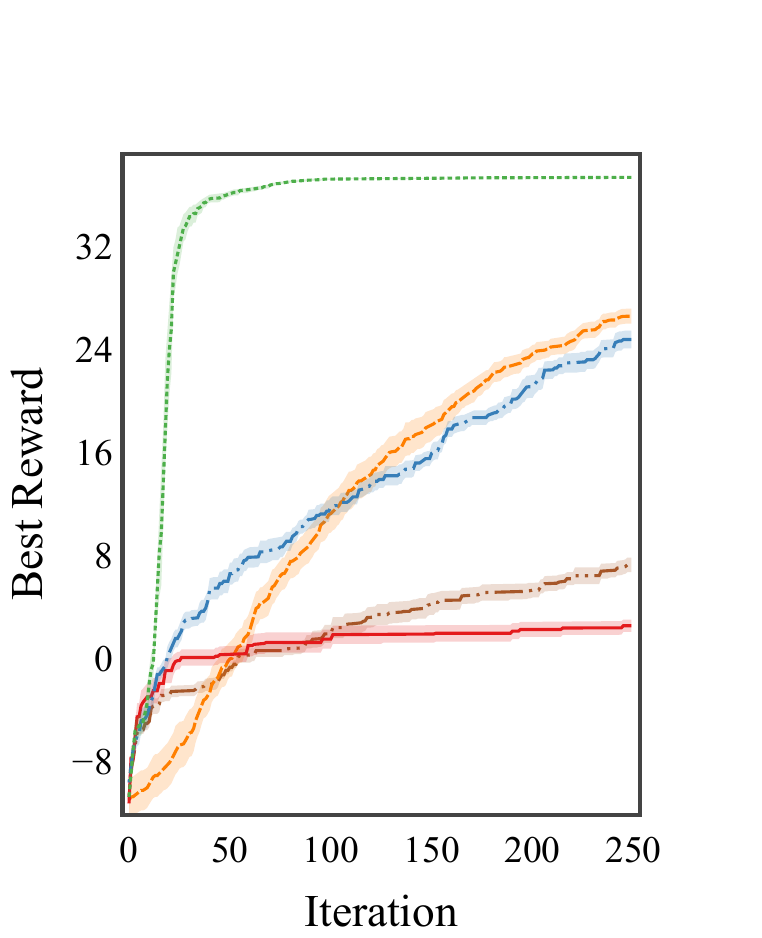}
        \label{fig:swarm}
    }%
    \subfigure[\textbf{Arena:} $M=50$, $|A|=30$, $|C|=2$.]{%
        \includegraphics[width=0.24\textwidth, trim={0 0 1.7cm 2.5cm}, clip]{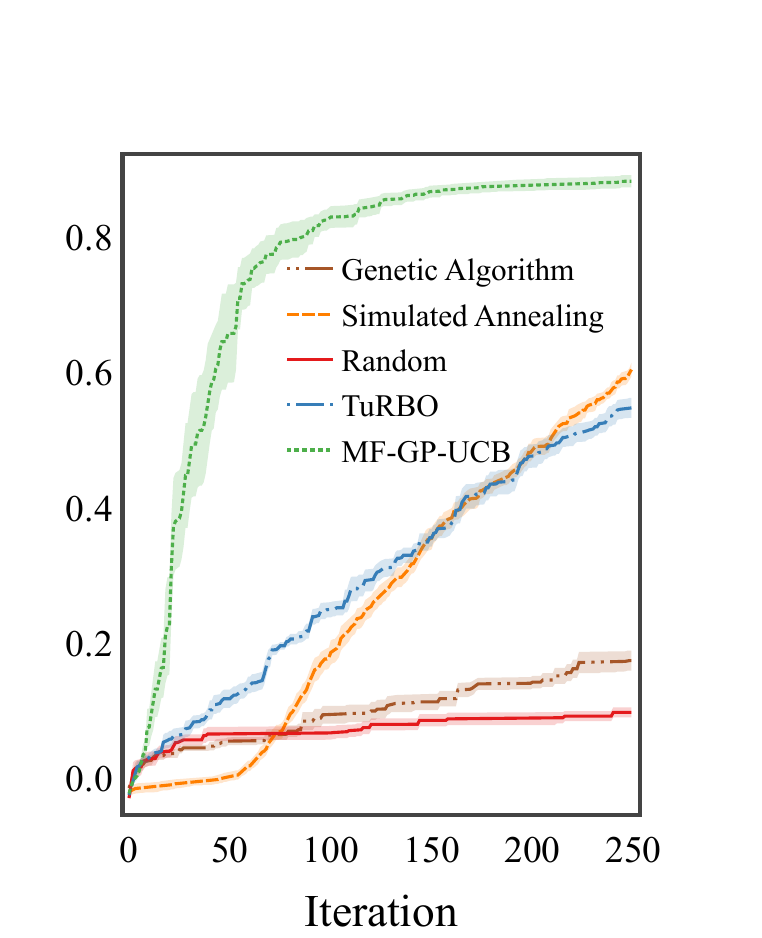}
        \label{fig:arena}
    }%
    \subfigure[\textbf{LouVelo bike-sharing:} $M=300$, $|A|=36$, $|C|=1$.]{%
        \includegraphics[width=0.24\textwidth, trim={0 0 1.7cm 2.5cm}, clip]{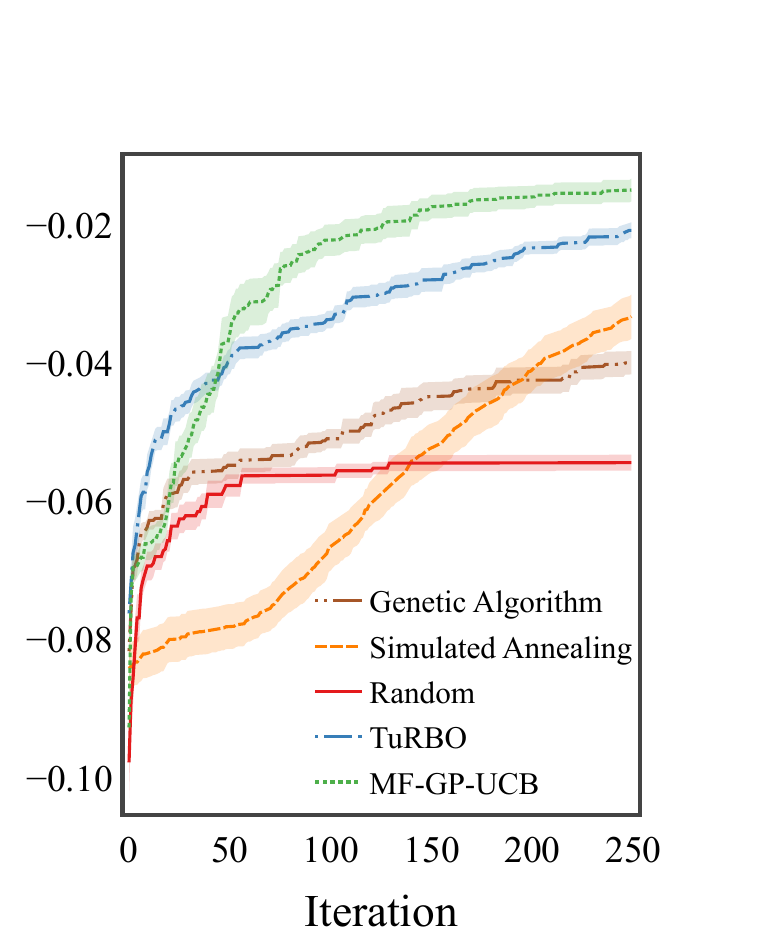}
        \label{fig:louvelo}
    }%
    \subfigure[\textbf{Maritime refuelling:} $M=3000$, $|A|=30$, $|C|=5$.]{%
        \includegraphics[width=0.24\textwidth, trim={0 0 1.7cm 2.5cm}, clip]{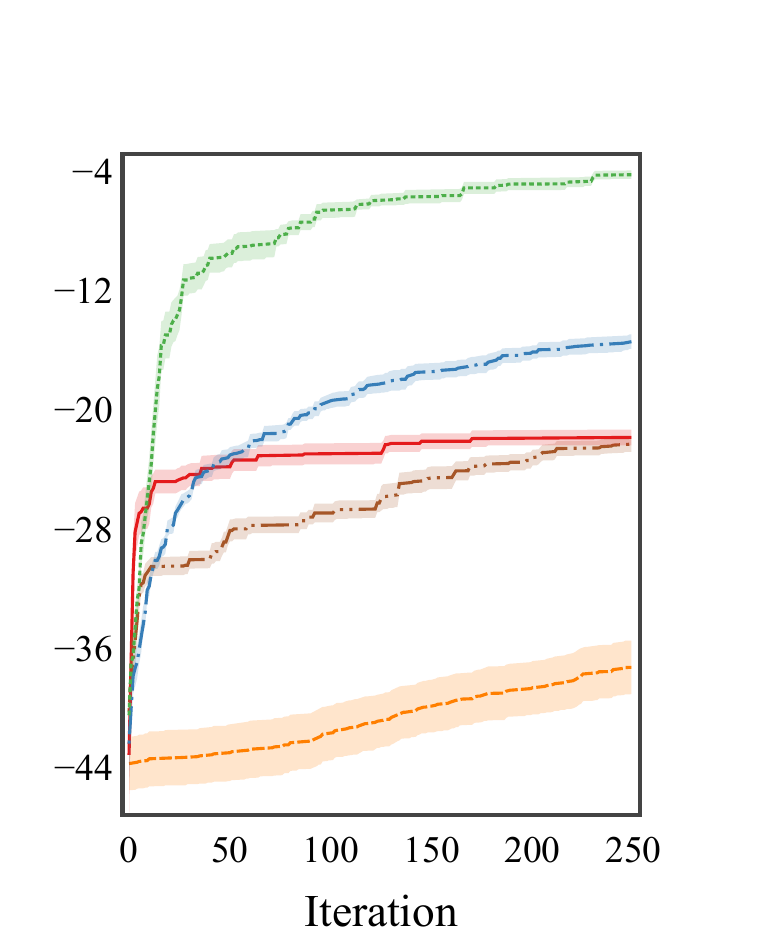}
        \label{fig:maritime}
    }%
    \caption{MF-GP-UCB is superior in both sample efficiency and solution quality compared to the benchmarks over a range of black-box dimensions $M$. When the black box satisfies the mean-field assumption, our algorithm inevitably has the advantage by optimising over the \textit{distribution} of actions instead of over the interactions of individual actions.}
    \label{fig:results}
\end{figure*}

\subsection{Implementation} \label{exp:implementation}
The implementations for the benchmarks were adopted from the MCBO framework \cite{dreczkowski2024framework}.
Implementing MF-GP-UCB involved the use of BoTorch \cite{balandat2020botorch} and GPyTorch \cite{gardner2018gpytorch} for the GP model with an additive kernel composed as defined in Equation \ref{eq:used_kernel},
and the Adam algorithm \cite{kingma2014adam} from PyTorch \cite{paszke2019pytorch} was used for optimising the distribution over actions in \Cref{alg:ucb_mf_bo_centralised} (line 5). We used the default parameter values, with the exception of a slightly higher learning rate $\gamma=0.01$. Context sampling between iterations was enabled with instances of MF-GP-UCB, while for the benchmarks, we assigned the contexts at the start of the optimisation and kept them fixed. This was done to simplify implementation since the context is ideally randomly assigned, but the benchmarks are designed for full control over the inputs. The effect of this on benchmark performance is only advantageous since the agents/dimensions don't switch context. For clarity, the result plots display the \textit{best} reward by the respective algorithms as opposed to the actual reward obtained at a given iteration. For instance, if an algorithm regresses and begins to suggest worse solutions, its previously best found reward value will accumulate. We average these values over 10 seed runs for $T=250$ iterations, with shaded areas representing the standard error. The random algorithm generates a random uniform vector in $[0, 1]^{|A|}$ for each context before passing it through a Softmax and sampling actions from it.

\subsection{Synthetic Experiments} \label{exp:synthetic}
Our synthetic toy problems aimed to investigate MF-GP-UCB's performance in low-dimensional settings before scaling up to hundreds and eventually thousands of agents. \looseness=-1

\subsubsection{Swarm Motion} \label{synthetic:swarm}
Our swarm motion problem is a modification of the standard version, such as described previously \cite{jusup2023safe}, to fit into a discrete and stateless problem setting. Each agent picks an action and contributes to the system reward given by:
\[
    r(\mathbf{x})=\underbrace{\sum_{m=1}^M\Bigr [2\pi^2\Bigl(\sin{(x_m)}-\cos^2{(x_m)}\Bigr) + 2\sin{(x_m)}\Bigl]}_\text{reward term}\]
    \vspace{-0.15cm}
    \[\underbrace{-\sum_x\sigma\log{(P_x+1)}}_\text{penalty term},
\] 
where $P_x$ is the normalised frequency of action $x$ over all agents and $\sigma$ is a congestion factor. Simply put, the agents will want to congregate in high-reward areas but will also be punished for creating crowds. The optimal solution is thus a dispersion around the maximum of the reward term at $x=\pi/2$, which is exactly what MF-GP-UCB converges to in about 80 iterations (see \Cref{fig:swarm}). Conversely, if every agent picks the same action, the penalty term will dim the reward term so that such a solution cannot be optimal. The congestion factor $\sigma$ controls the amplitude of this dimming effect -- larger values encourage more dispersion in the optimal distribution. Our configuration uses $M=50$ agents in an action space of $|A|=30$ equally spaced points over $A= [0, 2\pi]$ with a congestion factor $\sigma=10$.

\subsubsection{Arena} \label{synthetic:arena}
The arena function is a custom toy problem designed to showcase support for agent contexts. Each agent is assigned a context by sampling from a predefined measure $p(\bm{c})$. The system is incentivised to avoid crowded areas by a penalty term identical to \Cref{synthetic:swarm}. The total system reward is given by:
\[
    r(\mathbf{x})=\overbrace{\frac{1}{{M \choose 2}}\sum_{\substack{i,j \\i\neq j}}\bm{c}_i\bm{c}_j\cos{(x_i-x_j)}}^\text{reward term} \underbrace{-\sum_x\sigma\log{(P_x+1)}}_\text{penalty term},
\]
considering all interactions between unique pairs of agents $i$ and $j$, with $\bm{c} \in \{-1, 1\}$ and $A= [0, 2\pi]$. An additional optimisation difficulty stems from the fact that the function is multimodal: shifting all the agents by some arbitrary $\theta$ produces the same reward value. We assign $M=50$ agents to pick from $|A|=30$ discrete actions on a circle, with a Rademacher context distribution and a congestion factor $\sigma=10$. The idea is to separate supporters/agents of two opposing teams/contexts in an arena (hence the name) on opposite sides to minimise potential conflict while at the same time avoiding huge crowds. In simple terms, agents pairs with the same/opposite context contribute positively/negatively to the reward if they are close to each other. One such desirable arrangement can be seen in \Cref{fig:arena_histogram}, a solution our algorithm achieves. \Cref{fig:arena} displays the result of the arena experiment, demonstrating the superiority of MF-GP-UCB.

\begin{figure}[t!]
    \centering
    \includegraphics[width=0.36\textwidth]{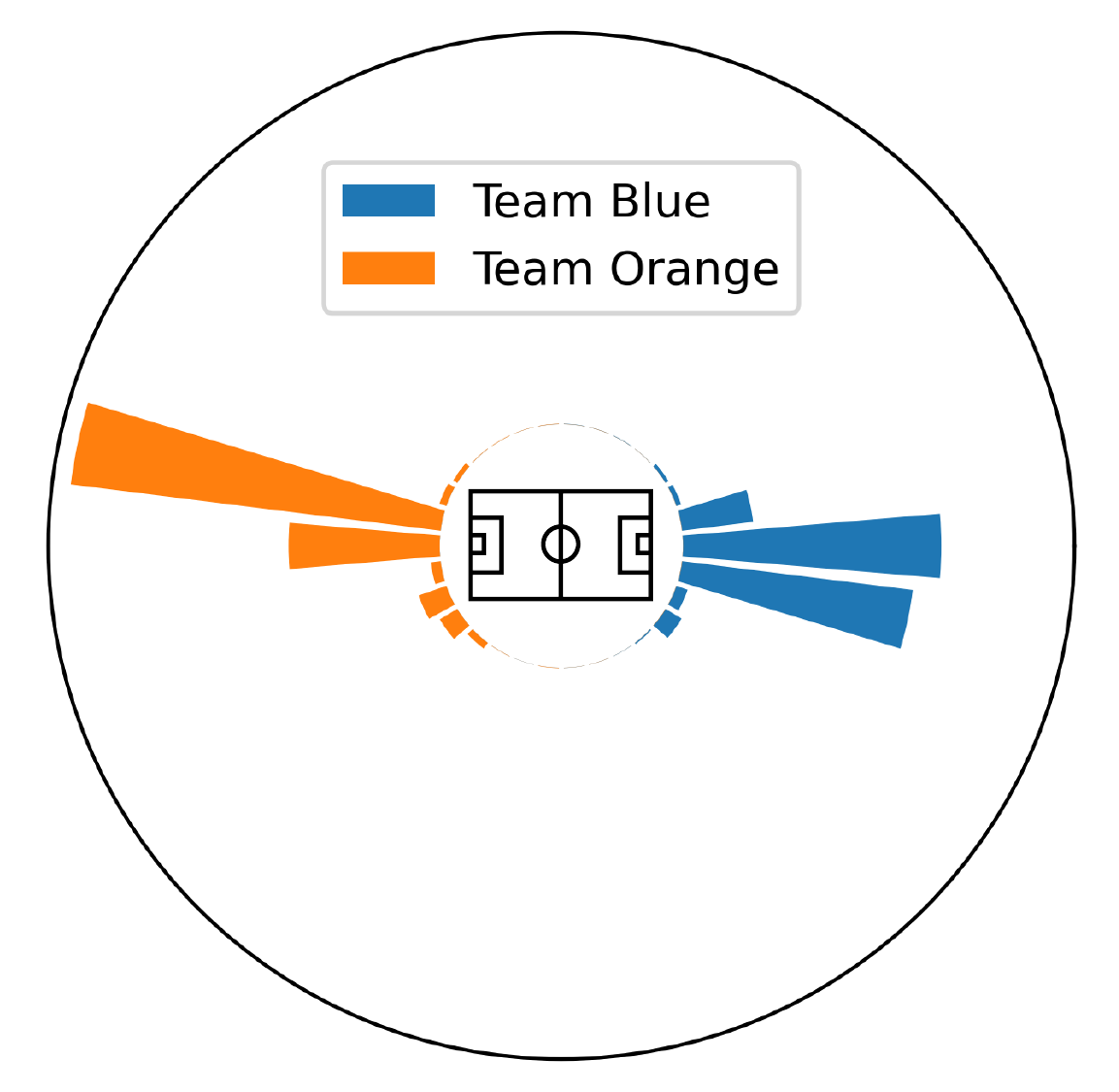}
    \caption{Arena Histogram -- a visual example of a solution suggested by MF-GP-UCB. The reward term encourages separating the supporters of the two teams/contexts around the pitch, while the penalty term ensures there is no extreme congregation in just two directly opposite seating areas. The congestion factor $\sigma$ controls the "smoothness" of the optimal histograms. This solution produced a black-box reward value $r(\mathbf{x})\approx0.9$.}
    \label{fig:arena_histogram}
\end{figure}

\subsection{Real-World Experiments} \label{exp:real-world}

\subsubsection{LouVelo Bike-sharing} \label{real-world:LouVelo}
This dataset \cite{louvelo-data} contains trip-level data of a bike-sharing programme in Louisville, KY. A total of $M=300$ bicycles are borrowed and returned across $|A|=41$ discrete stations distributed over the city. Naturally, the demand for bicycles will vary depending on the station, and thus, it is in the interest of the centralised controller to provide the optimal distribution of bicycles across the city in order to best match the demand. Each agent/bicycle picks an action/station and contributes to the system payoff, computed as the negative of Jensen-Shannon divergence $D_{JS}$ between the empirical distribution of bicycles and the ground truth distribution of demand. $D_{JS}$ for distributions $P$ and $Q$ is given by: 
\[
    D_{JS}(P||Q)=\frac{1}{2}D_{KL}(P||H)+\frac{1}{2}D_{KL}(Q||H),
\] 
where $D_{KL}$ is the Kullback-Leibler (KL) divergence and $H$ is a mixture distribution defined as $H=\frac{1}{2}(P+Q)$. $D_{JS}$ returns $0$ for two identical distributions, making it easy to track proximity to the optimal distribution. We infer a demand distribution from the bicycle pickup data and corresponding timestamps. We chose twenty consecutive Saturdays (as it is the busiest day) to generate twenty normalised distributions, taking into account the pickups throughout the entire day. We expect this to be in line with the bicycle re-positioning that the central controller executes on a daily basis. The mean of the normalised distributions is then taken as the ground truth demand distribution. This experiment is, in essence, a scale-up to the swarm motion experiment, with real data driving the black-box evaluation. It is no surprise then that MF-GP-UCB outperforms the benchmarks, quickly finding a near-optimal solution. The result is displayed in \Cref{fig:louvelo}. \looseness=-1

\subsubsection{NYC Taxi} \label{real-world:NYC}

\begin{figure}[t!]
    \centering
    \includegraphics[width=0.85\columnwidth, trim={0 3.5cm 0.5cm 0}, clip]{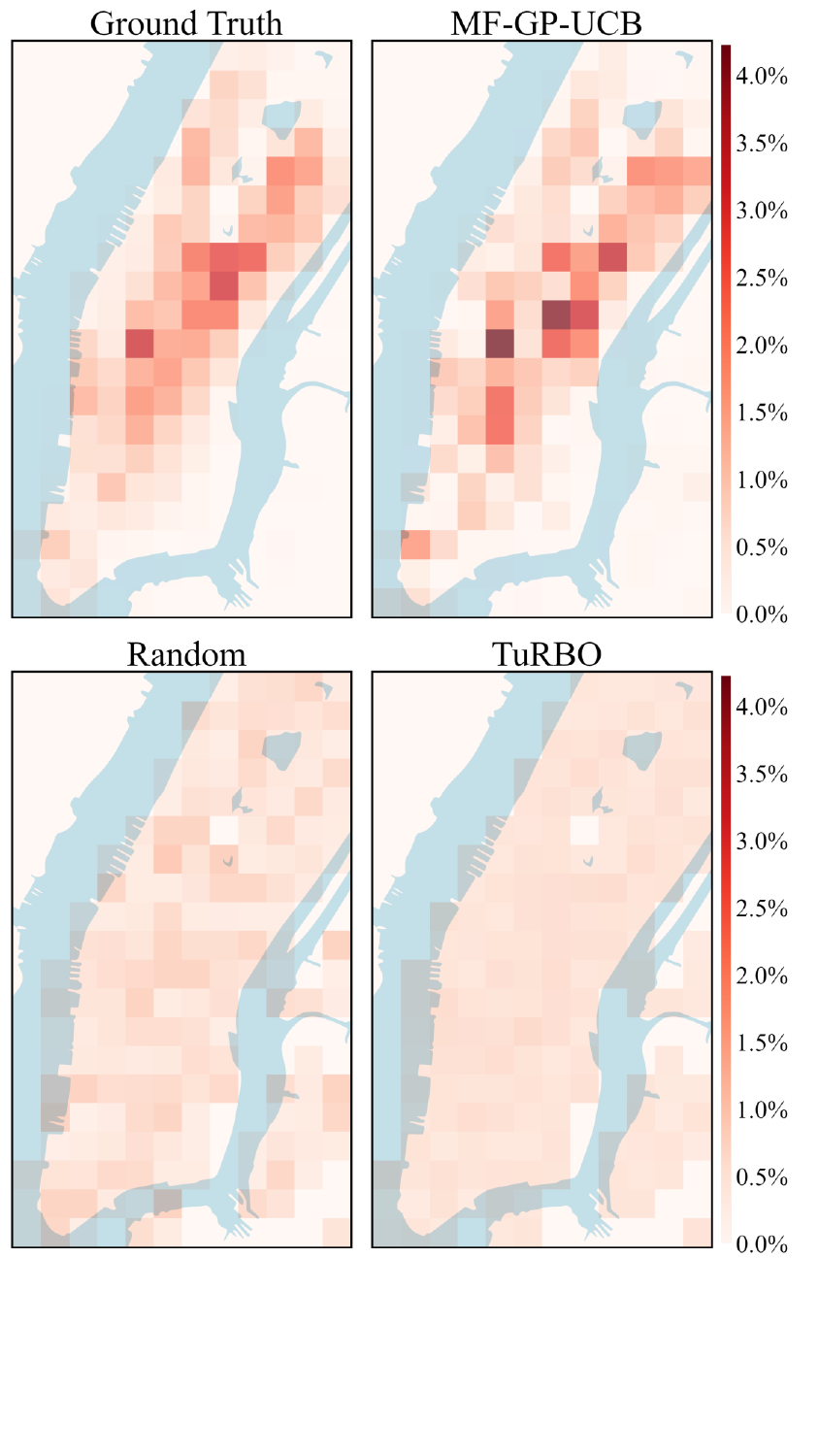}
    \caption{Manhattan Island -- NYC Taxi ground truth demand in a $12\times20$ heatmap and best solutions found by respective algorithms. The solution by MF-GP-UCB is clearly closer to the ground truth than TuRBO, which is closer to random. This is further confirmed in \Cref{fig:NYC-12x20-}. Leveraging the mean-field assumption gives MF-GP-UCB the advantage, while TuRBO is optimising individual actions over $M=20,000$ dimensions and thus not getting much further than uniformity in $T=250$ iterations.}
    \label{fig:NYC-heatmaps}
    
\end{figure}

\begin{figure*}[t!]
    \centering
    \subfigure[grid $=12\times20$, $|A|=160$]{%
        \includegraphics[width=0.32\textwidth, trim={0 0 1.7cm 2.5cm}, clip]{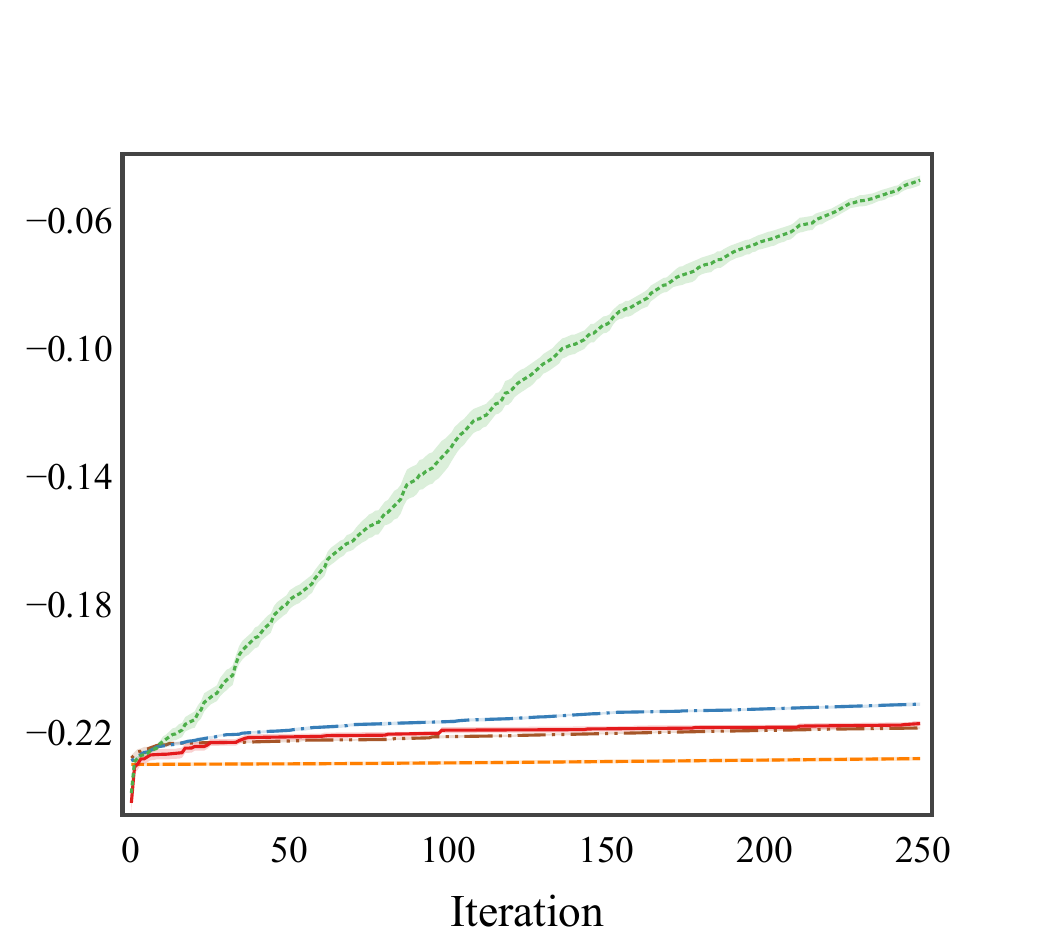}
        \label{fig:NYC-12x20-}
    }%
    \subfigure[grid $=15\times26$, $|A|=235$]{%
        \includegraphics[width=0.32\textwidth, trim={0 0 1.7cm 2.5cm}, clip]{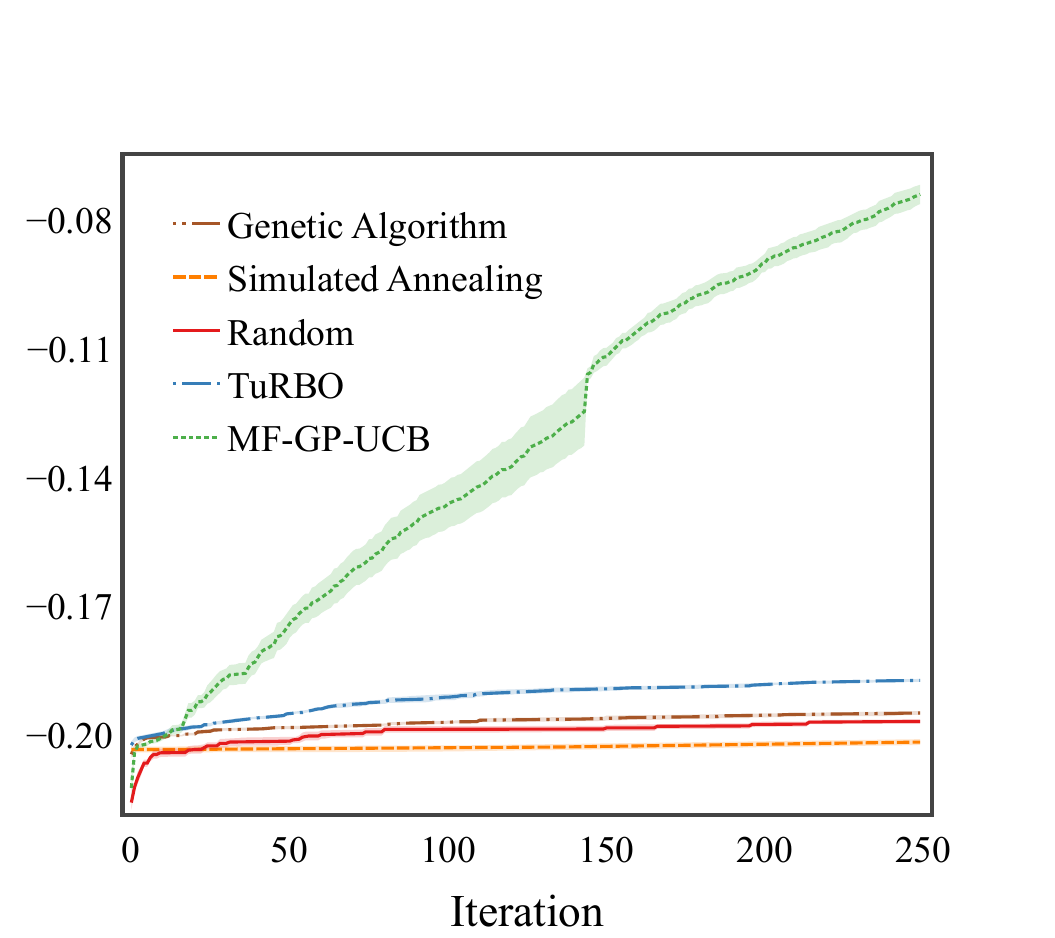}
        \label{fig:NYC-15x26-}
    }%
    \subfigure[grid $=20\times36$, $|A|=394$]{%
        \includegraphics[width=0.32\textwidth, trim={0 0 1.7cm 2.5cm}, clip]{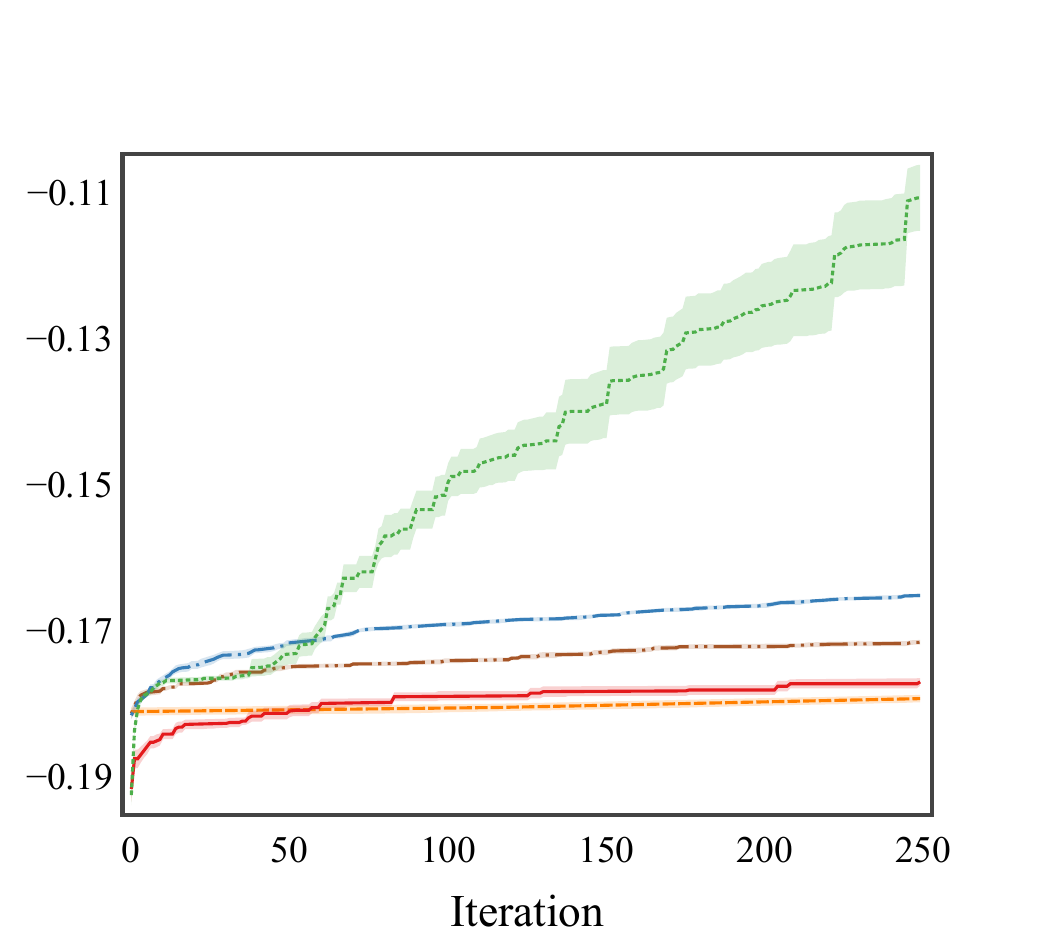}
        \label{fig:NYC-20x36-}
    }%
    \caption{NYC Taxi -- the scaling power of MF-GP-UCB is made more apparent in this experiment: where TuRBO struggles to do much better than random in the 250 iteration regime, our algorithm finds great solutions in all three grid sizes. The black-box dimension of this experiment is $M=20,000$ with the denoted action spaces and a single context $|C|=1$. This experiment is a scaled-up version of the LouVelo bike-sharing function.}
    \label{fig:NYC-Taxi}
\end{figure*}

We use this dataset \cite{nyc-data} to estimate the demand distribution for taxis across NYC. Pickup locations are provided in continuous space, so the service region was discretised into a mesh grid to accommodate a discrete action space. The idea is much the same as in \Cref{real-world:LouVelo} where each agent/taxi picks a point in the grid, and the joint payoff is computed as the negative of $D_{JS}$ between the empirical distribution of taxis, and the ground truth distribution of demand. Our service region is focused on the centre of Manhattan Island since this is where most of the available data is. The demand is estimated by analysis of the ride pickup locations and their corresponding timestamps. We specify, just as in the previous section, a specific weekday and a number of consecutive weekly samples. We chose twenty Fridays with an additionally specified rush-hour interval from 15:00 to 18:00 to create normalised pickup distributions. To estimate the normalised ground truth demand distribution, we take the average of these twenty samples. In this experiment, we scale the number of agents to a more impressive twenty thousand and use three variants of mesh-grid sizes \{$12\times20$, $15\times26$, $20\times36$\} corresponding to blocks of approximately $500\ \text{m}^2$, $400\ \text{m}^2$, and $300\ \text{m}^2$ respectively. The blocks with no demand are removed from the pool of possible actions, resulting in actual action space sizes of $\{160, 235, 394\}$. \Cref{fig:NYC-heatmaps} is a comparison between the ground truth demand distribution and solutions found by MF-GP-UCB, random search, and TuRBO in a $12\times20$ grid and $T=250$ iterations. Due to the nature of optimising over a $M=20,000$ dimensional space, the benchmarks struggle to improve much over random search, while MF-GP-UCB is in a league of its own due to the MF assumption. In \Cref{fig:NYC-Taxi}, we report the convergence of the NYC Taxi experiment. \looseness=-1

\subsubsection{Maritime Refuelling} \label{real-world:Maritime}
The maritime refuelling experiment leverages the use of agent contexts. Each agent/vessel samples a context (one of five regions in the world), and its action is a port to refuel in. Furthermore, each port in the dataset has a region attribute. We would like to distribute the vessels across the available ports in a way that minimises a) waiting times caused by congestion and b) costly endeavours of region switching. The agent incurs a penalty by choosing a port in a context/region that is different from its own. The system reward may also be expressed as: 
\[
    r(\mathbf{x})=\frac{1}{M}\sum_{m=1}^{M}\Bigl[ -0.5\overbrace{-\frac{P_{x_m}}{V_{x_m}}}^{\text{congestion}}\underbrace{+\mathds{1}_{\bm{c}_{m}}(\bm{c}_{x_m})}_\text{context bonus}\Bigr],
\]
where $P_{x_m}$ is the frequency of action $x_m$ across all agents (port occupancy), $V_{x_m}$ is the normalised capacity of port $x_m$ (estimated from dataset \citet{maritime-data}), and $\mathds{1}_{\bm{c}_{m}}$ is the indicator function that checks if the contexts are matching. Each context encompasses exactly a fifth of the total agent population, but the ports do not necessarily reflect this distribution. In other words, agents will sometimes be forced to balance between picking a crowded port (and reducing the reward for everyone picking that port), or changing their region and making a sacrifice for the greater good. They simultaneously have to appropriately disperse over the available ports while attempting to match contexts. This dynamic makes the problem interesting and challenging to optimise. Here, we use $M=3,000$ agents/vessels over an action space of $|A|=30$ randomly selected ports and $|C|=5$. The result is displayed in \Cref{fig:maritime}.

\section{Conclusion}
Within this paper, we introduced MF-GP-UCB, a novel BO algorithm designed to address the challenge of optimising large-scale cooperative agent systems with unknown, black-box payoff functions. MF-GP-UCB overcomes the limitations of traditional BO methods by leveraging the mean-field assumption, achieving a regret bound independent of the number of agents. This key advantage stems from exploiting the invariance of mean-field functions to permutations of agent actions. We provide a theoretical analysis establishing this improved regret bound. Furthermore, extensive empirical evaluations on diverse synthetic problems and real-world applications, including bike-sharing optimisation, taxi fleet distribution, and maritime vessel refuelling, demonstrate significant performance improvements and enhanced scalability compared to existing benchmark algorithms. In conclusion, MF-GP-UCB offers a powerful and scalable solution for mean-field, black-box optimisation in complex multi-agent systems. 

One limitation of our work is that we assumed a centralised controller and the ability to communicate with all of the agents so that the current policy can be shared.
A promising direction for future work is relaxing this assumption, allowing for a completely decentralised solution without the need for any communication during the runtime of the algorithm.
\looseness=-1
\section*{Impact Statement}
This paper presents work whose goal is to advance the field
of Machine Learning. There are many potential societal
consequences of our work, none which we feel must be
specifically highlighted here.

\section*{Acknowledgements}
Juliusz Ziomek was supported by the Oxford Ashton-Memorial Scholarship and EPSRC DTP grant EP/W524311/1. Matej Jusup acknowledges support from the Swiss National Science Foundation under the research project DADA/181210. Ilija Bogunovic was supported by the EPSRC New Investigator Award EP/X03917X/1; the Engineering and Physical Sciences Research Council EP/S021566/1; and Google Research Scholar award.

\bibliography{arXiv_paper}

\begin{thebibliography}{42}
\providecommand{\natexlab}[1]{#1}
\providecommand{\url}[1]{\texttt{#1}}
\expandafter\ifx\csname urlstyle\endcsname\relax
  \providecommand{\doi}[1]{doi: #1}\else
  \providecommand{\doi}{doi: \begingroup \urlstyle{rm}\Url}\fi

\bibitem[Balandat et~al.(2020)Balandat, Karrer, Jiang, Daulton, Letham, Wilson, and Bakshy]{balandat2020botorch}
Balandat, M., Karrer, B., Jiang, D.~R., Daulton, S., Letham, B., Wilson, A.~G., and Bakshy, E.
\newblock {BoTorch: A Framework for Efficient Monte-Carlo Bayesian Optimization}.
\newblock In \emph{Advances in Neural Information Processing Systems 33}, 2020.
\newblock URL \url{https://proceedings.neurips.cc/paper/2020/hash/f5b1b89d98b7286673128a5fb112cb9a-Abstract.html}.

\bibitem[B{\"{a}}uerle(2021)]{Bauerle2021MeanProcesses}
B{\"{a}}uerle, N.
\newblock {Mean Field Markov Decision Processes}.
\newblock \emph{arXiv preprint arXiv:2106.08755}, 2021.

\bibitem[Bichuch et~al.(2024)Bichuch, Dayan{\i}kl{\i}, and Lauriere]{bichuch2024stackelberg}
Bichuch, M., Dayan{\i}kl{\i}, G., and Lauriere, M.
\newblock A stackelberg mean field game for green regulator with a large number of prosumers.
\newblock \emph{Available at SSRN 4985557}, 2024.

\bibitem[Brown et~al.(2024)Brown, Cioba, and Bogunovic]{brown2024sample}
Brown, T., Cioba, A., and Bogunovic, I.
\newblock Sample-efficient bayesian optimisation using known invariances.
\newblock \emph{arXiv preprint arXiv:2410.16972}, 2024.

\bibitem[Carmona et~al.(2019)Carmona, Lauri{\`e}re, and Tan]{carmona2019model}
Carmona, R., Lauri{\`e}re, M., and Tan, Z.
\newblock Model-free mean-field reinforcement learning: mean-field mdp and mean-field q-learning.
\newblock \emph{arXiv preprint arXiv:1910.12802}, 2019.

\bibitem[Chen et~al.(2021)Chen, Li, Wang, Yang, Wang, and Zhao]{chen2021pessimism}
Chen, M., Li, Y., Wang, E., Yang, Z., Wang, Z., and Zhao, T.
\newblock Pessimism meets invariance: Provably efficient offline mean-field multi-agent rl.
\newblock \emph{Advances in Neural Information Processing Systems}, 34:\penalty0 17913--17926, 2021.

\bibitem[Chowdhury \& Gopalan(2017)Chowdhury and Gopalan]{chowdhury2017kernelized}
Chowdhury, S.~R. and Gopalan, A.
\newblock On kernelized multi-armed bandits.
\newblock In \emph{International Conference on Machine Learning}, pp.\  844--853. PMLR, 2017.

\bibitem[Courcoubetis \& Dimakis(2023)Courcoubetis and Dimakis]{courcoubetis2023stationary}
Courcoubetis, C. and Dimakis, A.
\newblock Stationary equilibrium of mean field games with congestion-dependent sojourn times.
\newblock In \emph{Proceedings of the 2023 International Conference on Autonomous Agents and Multiagent Systems}, pp.\  2913--2915, 2023.

\bibitem[Dayanikli \& Lauriere(2024)Dayanikli and Lauriere]{dayanikli2024multi}
Dayanikli, G. and Lauriere, M.
\newblock Multi-population mean field games with multiple major players: Application to carbon emission regulations.
\newblock In \emph{2024 American Control Conference (ACC)}, pp.\  5075--5081. IEEE, 2024.

\bibitem[Devroye \& Lugosi(2001)Devroye and Lugosi]{devroye2001combinatorial}
Devroye, L. and Lugosi, G.
\newblock \emph{Combinatorial methods in density estimation}.
\newblock Springer Science \& Business Media, 2001.

\bibitem[Dreczkowski et~al.(2024)Dreczkowski, Grosnit, and Bou~Ammar]{dreczkowski2024framework}
Dreczkowski, K., Grosnit, A., and Bou~Ammar, H.
\newblock Framework and benchmarks for combinatorial and mixed-variable bayesian optimization.
\newblock \emph{Advances in Neural Information Processing Systems}, 36, 2024.

\bibitem[Eriksson \& Jankowiak(2021)Eriksson and Jankowiak]{eriksson2021high}
Eriksson, D. and Jankowiak, M.
\newblock High-dimensional bayesian optimization with sparse axis-aligned subspaces.
\newblock In \emph{Uncertainty in Artificial Intelligence}, pp.\  493--503. PMLR, 2021.

\bibitem[Eriksson et~al.(2019)Eriksson, Pearce, Gardner, Turner, and Poloczek]{eriksson2019scalable}
Eriksson, D., Pearce, M., Gardner, J., Turner, R.~D., and Poloczek, M.
\newblock Scalable global optimization via local bayesian optimization.
\newblock \emph{Advances in neural information processing systems}, 32, 2019.

\bibitem[Gardner et~al.(2018)Gardner, Pleiss, Bindel, Weinberger, and Wilson]{gardner2018gpytorch}
Gardner, J.~R., Pleiss, G., Bindel, D., Weinberger, K.~Q., and Wilson, A.~G.
\newblock Gpytorch: Blackbox matrix-matrix gaussian process inference with gpu acceleration.
\newblock In \emph{Advances in Neural Information Processing Systems}, 2018.

\bibitem[Gast et~al.(2012)Gast, Gaujal, and Le~Boudec]{gast2012mean}
Gast, N., Gaujal, B., and Le~Boudec, J.-Y.
\newblock Mean field for {M}arkov decision processes: from discrete to continuous optimization.
\newblock \emph{IEEE Transactions on Automatic Control}, pp.\  2266--2280, 2012.

\bibitem[Gu et~al.(2020)Gu, Guo, Wei, and Xu]{Gu2019DynamicMFCs}
Gu, H., Guo, X., Wei, X., and Xu, R.
\newblock Dynamic programming principles for mean-field controls with learning.
\newblock \emph{arXiv preprint arXiv:1911.07314}, 2020.

\bibitem[Gu et~al.(2021)Gu, Guo, Wei, and Xu]{Gu2021Mean-FieldAnalysis}
Gu, H., Guo, X., Wei, X., and Xu, R.
\newblock Mean-field controls with q-learning for cooperative marl: Convergence and complexity analysis.
\newblock \emph{SIAM Journal on Mathematics of Data Science}, 3\penalty0 (4):\penalty0 1168--1196, 2021.

\bibitem[Han et~al.(2021)Han, Arora, and Scarlett]{han2021high}
Han, E., Arora, I., and Scarlett, J.
\newblock High-dimensional bayesian optimization via tree-structured additive models.
\newblock In \emph{Proceedings of the AAAI Conference on Artificial Intelligence}, volume~35, pp.\  7630--7638, 2021.

\bibitem[Hu et~al.(2023)Hu, Wei, Yan, and Zhang]{hu2023graphon}
Hu, Y., Wei, X., Yan, J., and Zhang, H.
\newblock Graphon mean-field control for cooperative multi-agent reinforcement learning.
\newblock \emph{Journal of the Franklin Institute}, 360\penalty0 (18):\penalty0 14783--14805, 2023.

\bibitem[IMF PortWatch()]{maritime-data}
IMF PortWatch.
\newblock Sources: Un global platform.
\newblock \url{https://portwatch.imf.org/datasets/acc668d199d1472abaaf2467133d4ca4/about}.
\newblock Accessed: 2025-01-29.

\bibitem[Jusup et~al.(2023)Jusup, P{\'a}sztor, Janik, Zhang, Corman, Krause, and Bogunovic]{jusup2023safe}
Jusup, M., P{\'a}sztor, B., Janik, T., Zhang, K., Corman, F., Krause, A., and Bogunovic, I.
\newblock Safe model-based multi-agent mean-field reinforcement learning.
\newblock \emph{arXiv preprint arXiv:2306.17052}, 2023.

\bibitem[Kandasamy et~al.(2015)Kandasamy, Schneider, and P{\'o}czos]{kandasamy2015high}
Kandasamy, K., Schneider, J., and P{\'o}czos, B.
\newblock High dimensional bayesian optimisation and bandits via additive models.
\newblock In \emph{International conference on machine learning}, pp.\  295--304. PMLR, 2015.

\bibitem[Kingma(2014)]{kingma2014adam}
Kingma, D.~P.
\newblock Adam: A method for stochastic optimization.
\newblock \emph{arXiv preprint arXiv:1412.6980}, 2014.

\bibitem[Kingma et~al.(2015)Kingma, Salimans, and Welling]{kingma2015variational}
Kingma, D.~P., Salimans, T., and Welling, M.
\newblock Variational dropout and the local reparameterization trick.
\newblock \emph{Advances in neural information processing systems}, 28, 2015.

\bibitem[Lasry \& Lions(2007)Lasry and Lions]{lasry2007mean}
Lasry, J.-M. and Lions, P.-L.
\newblock Mean field games.
\newblock \emph{Japanese journal of mathematics}, 2\penalty0 (1):\penalty0 229--260, 2007.

\bibitem[Li et~al.(2019)Li, Qin, Jiao, Yang, Wang, Wang, Wu, and Ye]{li2019efficient}
Li, M., Qin, Z., Jiao, Y., Yang, Y., Wang, J., Wang, C., Wu, G., and Ye, J.
\newblock Efficient ridesharing order dispatching with mean field multi-agent reinforcement learning.
\newblock In \emph{The world wide web conference}, pp.\  983--994, 2019.

\bibitem[Louisville Metro Open Data()]{louvelo-data}
Louisville Metro Open Data.
\newblock Louisville metro ky - feature class containing trip level data for louvelo bikeshare program.
\newblock \url{https://louisville-metro-opendata-lojic.hub.arcgis.com/datasets/LOJIC::louisville-metro-ky-feature-class-containing-trip-level-data-for-louvelo-bikeshare-program}.
\newblock Accessed: 2025-01-29.

\bibitem[Motte \& Pham(2019)Motte and Pham]{Motte2019Mean-fieldControls}
Motte, M. and Pham, H.
\newblock Mean-field {M}arkov decision processes with common noise and open-loop controls.
\newblock \emph{arXiv preprint arXiv:1912.07883}, 2019.

\bibitem[Nayebi et~al.(2019)Nayebi, Munteanu, and Poloczek]{nayebi2019framework}
Nayebi, A., Munteanu, A., and Poloczek, M.
\newblock A framework for bayesian optimization in embedded subspaces.
\newblock In \emph{International Conference on Machine Learning}, pp.\  4752--4761. PMLR, 2019.

\bibitem[NYC TLC()]{nyc-data}
NYC TLC.
\newblock New york city taxi trip duration.
\newblock \url{https://www.kaggle.com/c/nyc-taxi-trip-duration/data}.
\newblock Accessed: 2025-01-29.

\bibitem[Paszke et~al.(2019)Paszke, Gross, Massa, Lerer, Bradbury, Chanan, Killeen, Lin, Gimelshein, Antiga, et~al.]{paszke2019pytorch}
Paszke, A., Gross, S., Massa, F., Lerer, A., Bradbury, J., Chanan, G., Killeen, T., Lin, Z., Gimelshein, N., Antiga, L., et~al.
\newblock Pytorch: An imperative style, high-performance deep learning library.
\newblock \emph{Advances in neural information processing systems}, 32, 2019.

\bibitem[P{\'a}sztor et~al.(2023)P{\'a}sztor, Krause, and Bogunovic]{pasztor2021efficient}
P{\'a}sztor, B., Krause, A., and Bogunovic, I.
\newblock Efficient model-based multi-agent mean-field reinforcement learning.
\newblock \emph{Transactions on Machine Learning Research}, 2023.
\newblock ISSN 2835-8856.

\bibitem[Rasmussen(2003)]{rasmussen2003gaussian}
Rasmussen, C.~E.
\newblock Gaussian processes in machine learning.
\newblock In \emph{Summer school on machine learning}, pp.\  63--71. Springer, 2003.

\bibitem[Rolland et~al.(2018)Rolland, Scarlett, Bogunovic, and Cevher]{rolland2018high}
Rolland, P., Scarlett, J., Bogunovic, I., and Cevher, V.
\newblock High-dimensional bayesian optimization via additive models with overlapping groups.
\newblock In \emph{International conference on artificial intelligence and statistics}, pp.\  298--307. PMLR, 2018.

\bibitem[Salhab et~al.(2017)Salhab, Le~Ny, and Malham{\'e}]{salhab2017dynamic}
Salhab, R., Le~Ny, J., and Malham{\'e}, R.~P.
\newblock A dynamic ride-sourcing game with many drivers.
\newblock In \emph{2017 55th Annual Allerton Conference on Communication, Control, and Computing (Allerton)}, pp.\  770--775. IEEE, 2017.

\bibitem[Sessa et~al.(2019)Sessa, Bogunovic, Kamgarpour, and Krause]{sessa2019no}
Sessa, P.~G., Bogunovic, I., Kamgarpour, M., and Krause, A.
\newblock No-regret learning in unknown games with correlated payoffs.
\newblock In Wallach, H., Larochelle, H., Beygelzimer, A., d\textquotesingle Alch\'{e}-Buc, F., Fox, E., and Garnett, R. (eds.), \emph{Advances in Neural Information Processing Systems}, volume~32. Curran Associates, Inc., 2019.
\newblock URL \url{https://proceedings.neurips.cc/paper_files/paper/2019/file/685217557383cd194b4f10ae4b39eebf-Paper.pdf}.

\bibitem[Srinivas et~al.(2009)Srinivas, Krause, Kakade, and Seeger]{srinivas2009gaussian}
Srinivas, N., Krause, A., Kakade, S.~M., and Seeger, M.
\newblock Gaussian process optimization in the bandit setting: No regret and experimental design.
\newblock \emph{arXiv preprint arXiv:0912.3995}, 2009.

\bibitem[Takeno et~al.(2023)Takeno, Inatsu, and Karasuyama]{takeno2023randomized}
Takeno, S., Inatsu, Y., and Karasuyama, M.
\newblock Randomized gaussian process upper confidence bound with tighter bayesian regret bounds.
\newblock In \emph{International Conference on Machine Learning}, pp.\  33490--33515. PMLR, 2023.

\bibitem[Wang et~al.(2020)Wang, Ding, Yang, Jin, Miao, Su, Zhang, Qiao, and Wang]{wang2020joint}
Wang, E., Ding, R., Yang, Z., Jin, H., Miao, C., Su, L., Zhang, F., Qiao, C., and Wang, X.
\newblock Joint charging and relocation recommendation for e-taxi drivers via multi-agent mean field hierarchical reinforcement learning.
\newblock \emph{IEEE Transactions on Mobile Computing}, 21\penalty0 (4):\penalty0 1274--1290, 2020.

\bibitem[Wang et~al.(2016)Wang, Hutter, Zoghi, Matheson, and De~Feitas]{wang2016bayesian}
Wang, Z., Hutter, F., Zoghi, M., Matheson, D., and De~Feitas, N.
\newblock Bayesian optimization in a billion dimensions via random embeddings.
\newblock \emph{Journal of Artificial Intelligence Research}, 55:\penalty0 361--387, 2016.

\bibitem[Zhang et~al.(2023)Zhang, Mittal, Djavadian, Twumasi-Boakye, and Nie]{zhang2023ride}
Zhang, K., Mittal, A., Djavadian, S., Twumasi-Boakye, R., and Nie, Y.~M.
\newblock Ride-hail vehicle routing (river) as a congestion game.
\newblock \emph{Transportation Research Part B: Methodological}, 177:\penalty0 102819, 2023.

\bibitem[Ziomek \& Ammar(2023)Ziomek and Ammar]{ziomek2023random}
Ziomek, J.~K. and Ammar, H.~B.
\newblock Are random decompositions all we need in high dimensional bayesian optimisation?
\newblock In \emph{International Conference on Machine Learning}, pp.\  43347--43368. PMLR, 2023.

\end{thebibliography}
\bibliographystyle{icml2025}

\onecolumn
\newpage
\appendix

\section{Proof of Theorem \ref{thm:ucb_mf_bo_bound_centralised}} \label{ap:ucb_mf_bo_bound_centralised_proof}
\begin{proof}
We will follow the general idea of \cite{takeno2023randomized} with some important modifications. Purely for the sake of analysis, at each timestep $t$, let us consider a discretisation $\Xi_t$ of the space of vectors representing possible distributions $\xi \in \mathbb{R}^{|A||C|}$ with each dimension split equally into $\tau_t = b|A||C|t^2(\log(a|A||C|) + \sqrt{\pi} / 2) $. Let $[\xi]_t$ denote the distribution in the discretisation $\Xi_t$ at time $t$ that is closest to $\xi$. 
Let us define $g(\xi) = \mathbb{E}_{\bm{x},\bm{c} \sim \xi(\bm{x}|\bm{c})p(\bm{c})}[f(\bm{x}, \bm{c}, \xi)]$. We can then decompose regret as:
\begin{align*}
    \mathbb{E}[R_T] &= \sum_{t=1}^T \Big(\mathbb{E}[g(\xi^\star) - g([\xi^\star]_t)] + \mathbb{E}[g([\xi^\star]_t)  -
    g(\xi_t)]\Big) \\
    & =  \underbrace{\sum_{t=1}^T \mathbb{E}[g(\xi^\star) - g([\xi^\star]_t)]}_{R_T^A} + \underbrace{\sum_{t=1}^T \mathbb{E}[g([\xi^\star]_t)  -
    g(\xi_t)]}_{R_T^B}   .
\end{align*}
We thus need to bound $R_T^A$ and $R_T^B$. Consider the second term for now. We have that:
\begin{align*}
    R_T^B &= \sum_{t=1}^T \mathbb{E}[g([\xi^\star]_t)  -
    g(\xi_t)] \\
    & = \sum_{t=1}^T \mathbb{E}_{\mathcal{D}_{t-1}}[\mathbb{E}[g([\xi^\star]_t)|\mathcal{D}_{t-1}]  - \alpha_{t}(\xi_t|\mathcal{D}_{t-1}) + \alpha_{t}(\xi_t|\mathcal{D}_{t-1}) - 
    \mathbb{E}[g(\xi_t)|\mathcal{D}_{t-1}]] \\
    & \le \sum_{t=1}^T \mathbb{E}_{\mathcal{D}_{t-1}}[\mathbb{E}[g([\xi^\star]_t)|\mathcal{D}_{t-1}]  - \alpha_{t}([\xi^\star]_t|\mathcal{D}_{t-1}) + \alpha_{t}(\xi_t|\mathcal{D}_{t-1}) - 
    \mathbb{E}[g(\xi_t)|\mathcal{D}_{t-1}]] \\
    & = \sum_{t=1}^T \mathbb{E}_{\mathcal{D}_{t-1}}[\mathbb{E}[g([\xi^\star]_t) - \alpha_{t}([\xi^\star]_t|\mathcal{D}_{t-1})|\mathcal{D}_{t-1}]   + \mathbb{E}[\alpha_{t}(\xi_t|\mathcal{D}_{t-1}) - 
    g(\xi_t)|\mathcal{D}_{t-1}]] \\
    & =  \underbrace{\sum_{t=1}^T\mathbb{E}[g([\xi^\star]_t) - \alpha_{t}([\xi^\star]_t|\mathcal{D}_{t-1})]}_{R_T^C}   + \underbrace{\sum_{t=1}^T\mathbb{E}[\alpha_{t}(\xi_t|\mathcal{D}_{t-1}) - 
    g(\xi_t)]}_{R_T^D} ,
\end{align*}
where the inequality follows from the fact that due to acquisition rule $\xi_t = \max_{\xi \in \Delta_A^C \alpha(\xi_t|\mathcal{D}_{t-1})}$, we must have that $\alpha(\xi|\mathcal{D}_{t-1}) \ge  \alpha([\xi^\star]_t|\mathcal{D}_{t-1})$.
Let us summarise the proof so far. We decomposed the regret as follows:
\begin{align*}
    \mathbb{E}[R_T] \le R_T^A + R_T^B \le R_T^A + R_T^C + R_T^D .
\end{align*}
We will now bound each the of terms above, within the respective subsections below. Combining the resulting bounds will give:
\begin{align*}
    \mathbb{E}[R_T] &\le R_T^A + R_T^C + R_T^D\\
    &\le \frac{\pi^2}{6} + |A|\sqrt{2|A||C|}\min\{\frac{1}{\mathcal{G}},|A||C|\} \max\left\{ 2\zeta^\prime(2), \frac{\pi^2}{6} \right\} +\beta_T \sqrt{TD\gamma_T} + \frac{\pi^2}{6}  \\
    &= \mathcal{O}(\beta_T \sqrt{T\gamma_T} + \mathcal{B}),
\end{align*}
where $\mathcal{B} = \sqrt{2|A|^3|C|}\min\{(b(\log(a|A||C|) + \sqrt{\pi} / 2))^{-1},|A||C|\}$, $D>0$ is a constant, $\mathcal{G} = b(\log(a|A||C|) + \sqrt{\pi} / 2)$ and $\zeta(\cdot)$ is the Riemann zeta function.
\end{proof}
\subsection{Bounding $R_T^A$}
We decompose the difference between expected $g(\xi^\star)$ and $g([\xi^\star]_t)$ into the part coming from the difference in arguments to function $f(\cdot)$ and the part coming from difference of sampling distributions for $\bm{x}$. We do it below:
\begin{align*}
R_T^A &= \sum_{t=1}^T \mathbb{E}[g(\xi^\star) - g([\xi^\star]_t)]\\
    &\le \sum_{t=1}^T \Big( \mathbb{E}[\mathbb{E}_{\bm{c} \sim p(\bm{c})}[\mathbb{E}_{\bm{x} \sim \xi^\star(\bm{x}|\bm{c})}[f(\bm{x}, \bm{c}, \xi^\star)] - \mathbb{E}_{\bm{x} \sim [\xi^\star]_t(\bm{x}|\bm{c})}[f(\bm{x}, \bm{c}, [\xi^\star]_t)]]\Big) \\
    &= \sum_{t=1}^T \Big( \mathbb{E}[\mathbb{E}_{\bm{c} \sim p(\bm{c})}[\mathbb{E}_{\bm{x} \sim \xi^\star(\bm{x}|\bm{c})}[f(\bm{x}, \bm{c}, \xi^\star) - f(\bm{x}, \bm{c}, [\xi^\star]_t)] + \sum_{\bm{x} \in A}f(\bm{x}, \bm{c}, [\xi^\star]_t])([\xi^\star]_t(\bm{x}|\bm{c}) - \xi^\star(\bm{x}|\bm{c})  )]\Big) \\
    &\le \sum_{t=1}^T \Big( \mathbb{E}[\mathbb{E}_{\bm{c} \sim p(\bm{c})}[\mathbb{E}_{\bm{x} \sim \xi^\star(\bm{x}|\bm{c})}[f(\bm{x}, \bm{c}, \xi^\star) - f(\bm{x}, \bm{c}, [\xi^\star]_t)] + \max_{\bm{x},\bm{c}^\prime \in A\times C}\lVert [\xi^\star]_t(\bm{x}|\bm{c}^\prime) - \xi^\star(\bm{x}|\bm{c}^\prime) \rVert_1 \sum_{\bm{x} \in A}f(\bm{x}, \bm{c}, [\xi^\star]_t) ]\Big) .
\end{align*}    
Due to construction of the discretisation $\Xi_t$, we also have that:
\begin{align*}
    \sup_{\xi_t \in \Delta_C^A} \lVert [\xi^\star]_t(\bm{x}|\bm{c}) - \xi^\star(\bm{x}|\bm{c}) \rVert_1 \le \frac{|A||C|} {\tau_t} =   \frac{1}{bt^2(\log(a|A||C|) + \sqrt{\pi} / 2) } 
\end{align*}
We start by bounding the difference coming from different arguments to $f(\cdot)$. This part closely resembles the proof of \cite{takeno2023randomized}. We  have that :
\begin{align*}
    \sum_{t=1}^T\mathbb{E}[\mathbb{E}_{\bm{x},\bm{c} \sim \xi^\star(\bm{x}|\bm{c})p(\bm{c})}[f(\bm{x}, \bm{c}, \xi^\star) - f(\bm{x}, \bm{c}, [\xi^\star]_t)] &\le \sum_{t=1}^T \mathbb{E}[\sup_{\bm{x},\bm{c},\xi \in A \times C \times \Delta_C^A}[f(\bm{x}, \bm{c}, \xi^\star) - f(\bm{x}, \bm{c}, [\xi^\star]_t)] \\
    & \le \sum_{t=1}^T \mathbb{E}[L_{\textrm{max}}\sup_{\bm{x},\bm{c},\xi \in A \times C \times ^A} \lVert [\xi^\star]_t(\bm{x}|\bm{c}) - \xi^\star(\bm{x}|\bm{c}) \rVert_1] \\
    & \le\sum_{t=1}^T \mathbb{E}[L_{\textrm{max}}] \frac{1}{bt^2(\log(a|A||C|) + \sqrt{\pi} / 2) },
    \end{align*}
    where in the second line we used Assumption \ref{as:lkernel} and defined $L_{\textrm{max}} = \max_{\bm{x},\bm{c} \in A \times C}\sup_{\bm{x}^\prime,\bm{c}^\prime,\xi \in A \times C \times \Delta_C^A} \Big| \frac{\partial f}{\partial \xi(\bm{x}^\prime, \bm{c}, \xi)} \Big|$. As proven in Lemma H.1 of \cite{takeno2023randomized}, we have that $\mathbb{E}[L_{\textrm{max}}] \le b(\log(a|A||C|) + \sqrt{\pi} / 2)$. Plugging this into inequality above gives:
    \begin{align*}
    \sum_{t=1}^T \mathbb{E}[L_{\textrm{max}}] \frac{1}{bt^2(\log(a|A||C|) + \sqrt{\pi} / 2) } & \le\sum_{t=1}^T \frac{1}{t^2 } \\ 
    &\le \frac{\pi^2}{6}.
\end{align*}
It thus remains to bound the difference due to sampling distributions. 
\begin{align*}
    &\sum_{t=1}^T\mathbb{E}[\mathbb{E}_{\bm{c} \sim p(\bm{c})}[\max_{\bm{x},\bm{c}^\prime \in A\times C}\lVert [\xi^\star]_t(\bm{x}|\bm{c}^\prime) - \xi^\star(\bm{x}|\bm{c}^\prime) \rVert_1  \sum_{\bm{x} \in A}f(\bm{x}, \bm{c}, [\xi^\star]_t])] \Big) \\
    &\le \sum_{t=1}^T\mathbb{E}[\mathbb{E}_{\bm{c} \sim p(\bm{c})}[\sup_{\bm{x},\bm{c}^\prime,\xi \in A \times C \times \Delta_C^A} \lVert [\xi^\star]_t(\bm{x}|\bm{c}^\prime) - \xi^\star(\bm{x}|\bm{c}^\prime) \rVert_1 \sum_{\bm{x} \in A}\max_{\xi \in  \Xi_t}f(\bm{x}, \bm{c}, \xi)]]\\
    & \le \sum_{t=1}^T\frac{|A||C|}{\tau_t }\sum_{\bm{x} \in A}\mathbb{E}_{\bm{c} \sim p(\bm{c})}[\mathbb{E}_f[\max_{\xi \in  \Xi_t}f(\bm{x}, \bm{c}, \xi)]] .
\end{align*}
For a fixed $\bm{x}$ and $\bm{c}$, the expression $\mathbb{E}_f[\max_{\xi \in  \Xi_t}f(\bm{x}, \bm{c}, \xi)]$ is the expected maximum of $|\Xi_t|$ Gaussian variables (which are \textbf{not} independent). The marginal distribution of these variables is $f(\cdot) \sim \mathcal{N}(0, k(\cdot, \cdot))$  and due to Assumption on the kernel $k(\cdot, \cdot) \le 1$, we get  that the marginal variance  $\textrm{Var}(f(\cdot)) \le 1$. As such, we can employ Lemma 2.2 from \cite{devroye2001combinatorial}, which states that expected maximum of $n$ variables, which are $\sigma$-subgaussian is bounded by $\sigma \sqrt{2 \log n}$. Notably, this holds even if variables are dependent, like in our case. Employing this Lemma with $\sigma = k(\cdot, \cdot) \le 1$ and $n = |\Xi_t|$, we get:

\begin{align*}
    \sum_{t=1}^T\frac{|A||C|}{\tau_t }\sum_{\bm{x} \in A}\mathbb{E}_{\bm{c} \sim p(\bm{c})}[\mathbb{E}_f[\max_{\xi \in  \Xi_t}f(\bm{x}, \bm{c}, \xi)]] & \le \sum_{t=1}^T\frac{|A||C|}{\tau_t }\sum_{\bm{x} \in A}\mathbb{E}_{\bm{c} \sim p(\bm{c})}[\sqrt{2\log |\Xi_t|}] \\
    & =  \sum_{t=1}^T\frac{|A|^2|C|}{\tau_t }\sqrt{2|A||C|\log \tau_t} .
\end{align*}
Since we split the each dimension of the domain into $\tau_t$ pieces, $\tau_t$ cannot be smaller than $1$. We thus have:
\begin{align*}
    \sum_{t=1}^T\frac{|A|^2|C|}{\tau_t }\sqrt{2|A||C|\log \tau_t}  &\le \sum_{t=1}^T\frac{|A|^2|C|\sqrt{2|A||C|\log \tau_t}}{\max\{|A||C|bt^2(\log(a|A||C|) + \sqrt{\pi} / 2),1\} } \\
    & \le \sum_{t=1}^T\frac{|A|\sqrt{2|A||C|\log \tau_t}}{t^2\max\{b(\log(a|A||C|) + \sqrt{\pi} / 2),\frac{1}{|A||C|}\} } \\
    & \le \sum_{t=1}^T\frac{|A|\sqrt{2|A||C|\left(2\log t + \log b(\log(a|A||C|) + \sqrt{\pi} / 2)\right)}}{t^2\max\{b(\log(a|A||C|) + \sqrt{\pi} / 2),\frac{1}{|A||C|}\} } \\
    & \le \sum_{t=1}^T|A|\sqrt{2|A||C|}\frac{\sqrt{2\log t} + \sqrt{\log b(\log(a|A||C|) + \sqrt{\pi} / 2)}}{t^2\max\{b(\log(a|A||C|) + \sqrt{\pi} / 2),\frac{1}{|A||C|}\} } .
\end{align*}
For brevity, we will adopt the notation $\mathcal{G} = b(\log(a|A||C|) + \sqrt{\pi} / 2)$. Continuing from above we have:
\begin{align*}
\sum_{t=1}^T|A|\sqrt{2|A||C|}\frac{\sqrt{2\log t} + \sqrt{\mathcal{G}}}{t^2\max\{\mathcal{G},\frac{1}{|A||C|}\} }    & \le |A|\sqrt{2|A||C|} \sum_{t=1}^T\left(2\frac{\log t}{t^2 } \min\{\frac{1}{\mathcal{G}},|A||C|\} + \frac{1}{t^2}\min\{\frac{1}{\sqrt{\mathcal{G}}}, |A||C|\sqrt{\mathcal{G}}\}  \right) \\
& \le |A|\sqrt{2|A||C|}\min\{\frac{1}{\mathcal{G}},|A||C|\} \left(2 \sum_{t=1}^T\frac{\log t}{t^2 } + \sqrt{\mathcal{G}} \sum_{t=1}^T\frac{1}{t^2} \right).
\end{align*}
We now use the well-known inequality $\sum_{t=1}^T\frac{1}{t^2} \le \frac{\pi^2}{6}$ and observe that:
\begin{align*}
    \sum_{t=1}^T\frac{\log t}{t^2 } = \frac{d}{ds}\sum_{t=1}^T\frac{1}{t^s}\bigg|_{s=2} = \frac{d}{ds}\zeta(s)\bigg|_{s=2} = \zeta^\prime(2),
\end{align*}
where $\zeta$ is the famous Riemann zeta function. Thus continuing from previous display:
\begin{align*}
    |A|\sqrt{2|A||C|}\min\{\frac{1}{\mathcal{G}},|A||C|\} \left(2 \sum_{t=1}^T\frac{\log t}{t^2 } + \sqrt{\mathcal{G}} \sum_{t=1}^T\frac{1}{t^2} \right) &\le |A|\sqrt{2|A||C|}\min\{\frac{1}{\mathcal{G}},|A||C|\} \max\left\{ 2\zeta^\prime(2), \frac{\pi^2}{6} \right\} \\
\end{align*}

\subsection{Bounding $R_T^C$}
We will write $U_t(\bm{x}, \bm{c}, \xi) = \mu_{t-1}(\bm{x}, \bm{c}, \xi) +\beta_t \sigma_{t-1}(\bm{x}, \bm{c}, \xi) $.
\begin{align*}
     \sum_{t=1}^T\mathbb{E}[g([\xi^\star]_t) - \alpha_{t}([\xi^\star]_t|\mathcal{D}_{t-1})]&= \sum_{t=1}^T\mathbb{E}[\mathbb{E}_{\bm{x},\bm{c} \sim [\xi^\star]_t(\bm{x}|\bm{c})p(\bm{c})}[f(\bm{x}, \bm{c}, [\xi^\star]_t) - U_t(\bm{x}, \bm{c}, [\xi^\star]_t)] \\
     &\le \sum_{t=1}^T\mathbb{E}[\mathbb{E}_{\bm{x},\bm{c} \sim [\xi^\star]_t(\bm{x}|\bm{c})p(\bm{c})}[\left(f(\bm{x}, \bm{c}, [\xi^\star]_t) - U_t(\bm{x}, \bm{c}, [\xi^\star]_t)\right)^+] \\
     &\le \sum_{t=1}^T\sum_{\xi \in \Xi_t}\sum_{\substack{\bm{x} \in A \\ \bm{c} \in C}}\mathbb{E}[ \big(f(\bm{x}, \bm{c}, \xi) - U_t(\bm{x}, \bm{c}, \xi)\big)^+]\\
     &= \sum_{t=1}^T\sum_{\xi \in \Xi_t}\sum_{\substack{\bm{x} \in A \\ \bm{c} \in C}}\mathbb{E}[\mathbb{E}[ \big(f(\bm{x}, \bm{c}, \xi) - U_t(\bm{x}, \bm{c}, \xi)\big)^+]|\mathcal{D}_{t-1}] .
    \end{align*}
    
    Note that conditionally on $\mathcal{D}_{t-1}$, we have that $f(\bm{x}, \bm{c}, \xi) \sim \mathcal{N}(\mu_{t-1}(\bm{x}, \bm{c}, \xi), \sigma_{t-1}^2(\bm{x}, \bm{c}, \xi))$ and thus $f(\bm{x}, \bm{c}, \xi) - U_t(\bm{x}, \bm{c}, \xi) \sim \mathcal{N}(-\beta_t\sigma_{t-1}^2(\bm{x}, \bm{c}, \xi), \sigma_{t-1}^2(\bm{x}, \bm{c}, \xi))$. For a Gaussian variable $Z \sim \mathcal{N}(m,s^2)$, we have that:
    \begin{equation*}
        \mathbb{E}[(Z)^+] \le \frac{s}{\sqrt{2\pi}} \exp\left(-\frac{m^2}{2s^2}\right).
    \end{equation*}
Using this fact for the variable $\big(f(\bm{x}, \bm{c}, \xi) - U_t(\bm{x}, \bm{c}, \xi)\big)^+$, yields the following:
    
    \begin{align*}
     \sum_{t=1}^T\sum_{\xi \in \Xi_t}\sum_{\substack{\bm{x} \in A \\ \bm{c} \in C}}\mathbb{E}[\mathbb{E}[ \big(f(\bm{x}, \bm{c}, \xi) - U_t(\bm{x}, \bm{c}, \xi)\big)^+]|\mathcal{D}_{t-1}] & \le \sum_{t=1}^T\sum_{\xi \in \Xi_t}\sum_{\substack{\bm{x} \in A \\ \bm{c} \in C}}\mathbb{E}\Big[\frac{\sigma_{t-1}(\bm{x}, \bm{c}, \xi)}{\sqrt{2\pi}} \exp \big(-\frac{\beta_t}{2} \big)\Big]  \\
     & =\sum_{t=1}^T|A||C||\Xi_t|\frac{1}{\sqrt{2\pi}} \exp\big(-\frac{\beta_t}{2}\big)\\
     & \le \sum_{t=1}^T \frac{1}{t^2} \\
     &\le \frac{\pi^2}{6}
\end{align*}

\subsection{Bounding $R_T^D$}
 We will again write $U_t(\bm{x}, \bm{c}, \xi) = \mu_{t-1}(\bm{x}, \bm{c}, \xi) +\beta_t \sigma_{t-1}(\bm{x}, \bm{c}, \xi) $. We note, that due to the fact that $f$ is a sampled from a GP, we have that conditioned on $\mathcal{D}_{t-1}
$, the function at any given point follows a distribution
$f(\cdot) \sim \mathcal{N}(\mu_{t-1}(\cdot), \sigma^2(\cdot))$. Additionally, $\xi_t$ becomes deterministic conditioned on $\mathcal{D}_{t-1}$. As such, we have that conditioned on $\mathcal{D}_{t-1}$, $\xi_t$ is independent of the function $f$. This allows us to write:
\begin{align*}
    \sum_{t=1}^T\mathbb{E}[\alpha_{t}(\xi_t|\mathcal{D}_{t-1}) - 
    g(\xi_t)] &= \sum_{t=1}^T\mathbb{E}[\mathbb{E}_{\bm{x},\bm{c} \sim \xi_t(\bm{x}|\bm{c})p(\bm{c})}[U_t(\bm{x}, \bm{c}, \xi_t) - f(\bm{x}, \bm{c}, \xi_t)]] \\
&= \sum_{t=1}^T\mathbb{E}_{\mathcal{D}_{t-1}}[\mathbb{E}_{f}[\mathbb{E}_{\bm{x},\bm{c} \sim \xi_t(\bm{x}|\bm{c})p(\bm{c})}[U_t(\bm{x}, \bm{c}, \xi_t) - f(\bm{x}, \bm{c}, \xi_t)]]|\mathcal{D}_{t-1}] \\
& =\sum_{t=1}^T\mathbb{E}_{\mathcal{D}_{t-1}}[\mathbb{E}_{\bm{x},\bm{c} \sim \xi_t(\bm{x}|\bm{c})p(\bm{c})}[U_t(\bm{x}, \bm{c}, \xi_t) - \mathbb{E}_{f}[f(\bm{x}, \bm{c}, \xi_t)|\mathcal{D}_{t-1}, \xi_t]|\mathcal{D}_{t-1}]] \\
& = \sum_{t=1}^T\mathbb{E}_{\mathcal{D}_{t-1}}[\mathbb{E}_{\bm{x},\bm{c} \sim \xi_t(\bm{x}|\bm{c})p(\bm{c})}[U_t(\bm{x}, \bm{c}, \xi_t) - \mu_{t-1}(\bm{x}, \bm{c}, \xi_t)|\mathcal{D}_{t-1}]] \\
& = \sum_{t=1}^T\mathbb{E}_{\mathcal{D}_{t-1}}[\mathbb{E}_{\bm{x},\bm{c} \sim \xi_t(\bm{x}|\bm{c})p(\bm{c})}[\beta_t\sigma_{t-1}(\bm{x}, \bm{c}, \xi_t)|\mathcal{D}_{t-1}]] .
\end{align*}
We now note that due to the fact that representative agent's action $x_t$ follows the distribution $\xi_t$ and their context $\bm{c}_t$ follows distribution $p(\bm{c})$, we can write:
\begin{align*}
\sum_{t=1}^T\mathbb{E}_{\mathcal{D}_{t-1}}[\mathbb{E}_{\bm{x},\bm{c} \sim \xi_t(\bm{x}|\bm{c})p(\bm{c})}[\beta_t\sigma_{t-1}(\bm{x}, \bm{c}, \xi_t)|\mathcal{D}_{t-1}]] & = \sum_{t=1}^T\mathbb{E}[\beta_t\sigma_{t-1}(\bm{x}_t, \bm{c}_t, \xi_t)] \\
&\le \beta_T \mathbb{E}[\sum_{t=1}^T\sigma_{t-1}(\bm{x}_t, \bm{c}_t, \xi_t)] \\
& \le \beta_T \mathbb{E}[\sqrt{T\sum_{t=1}^T\sigma^2_{t-1}(\bm{x}_t, \bm{c}_t, \xi_t)}] \\
& \le \beta_T \mathbb{E}[\sqrt{TD\gamma_T}] ,
\end{align*}
for some positive constant $D > 0$, where the penultimate inequality is due to Cauchy-Schwarz and the last inequality is due to Lemma 5.4 of \cite{srinivas2009gaussian}.

\end{document}